\documentclass[11pt]{article}
\usepackage{authblk}

\title{Spectral partitioning for $k$-block averaging kernels of finite Markov chains}
\author[1]{Michael C.H. Choi\thanks{Email: mchchoi@nus.edu.sg, corresponding author}}
\author[1]{Youjia Wang\thanks{Email: e1124868@u.nus.edu}}
\affil[1]{Department of Statistics and Data Science, National University of Singapore, Level 7, 6 Science Drive 2, 117546, Singapore}

\usepackage[margin=1in]{geometry}
\usepackage{amsmath,amssymb,amsthm}
\usepackage{float}
\floatstyle{ruled}
\newfloat{algorithm}{tbp}{loa}
\floatname{algorithm}{Algorithm}
\usepackage[authoryear,round]{natbib}
\usepackage{xurl}
\usepackage{xcolor}
\usepackage[
colorlinks=true,
citecolor=blue,
linkcolor=red,
urlcolor=blue
]{hyperref}
\usepackage{array,booktabs,tabularx}
\usepackage{graphicx}
\graphicspath{{figures_v10/}}

\newtheorem{theorem}{Theorem}[section]
\newtheorem{proposition}[theorem]{Proposition}
\newtheorem{lemma}[theorem]{Lemma}
\newtheorem{corollary}[theorem]{Corollary}

\newcommand{\1}{\mathbf{1}}
\newcommand{\X}{\mathcal{X}}
\providecommand{\llbracket}{\mathopen{[\mkern-2.7mu[}}
\providecommand{\rrbracket}{\mathclose{]\mkern-2.7mu]}}

\begin{document}

\maketitle

\begin{abstract}
We develop spectral algorithms for selecting state-space partitions
that define averaging kernels for finite, ergodic and reversible Markov
chains.  For a partition $\mathcal O$, the Gibbs kernel
$G_{\mathcal O}$ resamples within the current block from the
stationary conditional distribution; when this update is tractable,
composing or mixing it with a baseline kernel $P$ can accelerate
convergence.  We select $\mathcal O$ by rounding the bottom
nonconstant eigenfunctions of $P^2$, or the algebraically smallest
eigenfunctions of $P$ for additive mixtures, using weighted
$k$-means.  For
$F(\mathcal O)=\|G_{\mathcal O}P-\Pi\|_{F,\pi}^2$, we derive exact
trace and normalized-cut representations and show that $F$ equals the
Pearson $\chi^2$-mutual information between the initial block label
and the state after one transition, giving this matrix objective a
natural probabilistic interpretation.  In the two-block case, a
threshold sweep exactly solves the associated one-dimensional
weighted two-means rounding problem.  For general $k \geq 2$, weighted
$k$-means rounds the bottom $(k-1)$-dimensional embedding, after which
candidates are rescored by $F$; the rounding distortion is a distance
between subspaces that yields spectral approximation bounds.  We extend the framework to additive mixtures, finite-horizon objectives, and discounted infinite-horizon objectives.  In contrast to classical normalized spectral clustering, which uses top nonconstant modes to find
low-flow persistent clusters, our method uses bottom modes to favor
large normalized cross-block flow and rapid loss of block-label
information.  Experiments on a controlled-spectrum graph, a
mean-field Ising model, and Bayesian variable selection show
notable per-iteration improvements in convergence and statistical
estimation.

\medskip
\noindent\textbf{Keywords.}
Markov chains; averaging kernels; Gibbs kernels;
state-space partitioning; spectral clustering; spectral rounding;
combinatorial optimization; Markov chain Monte Carlo; Frobenius norm; mutual information \newline
\noindent\textbf{2020 Mathematics Subject Classification.}
Primary 60J10; Secondary 60J22, 05C50, 90C27, 65C40.
\end{abstract}

\tableofcontents

\section{Introduction}
\label{sec:introduction}

A common way to improve a Markov chain is to supplement its local
moves with an update that averages over a larger part of the state
space.  Let $P$ be a finite, ergodic, reversible Markov kernel with
stationary distribution $\pi$, and let
$\mathcal O=\{O_1,\ldots,O_k\}$ be a partition.  The associated Gibbs
kernel $G_{\mathcal O}$ draws from $\pi$ conditional on the block that
contains the current state.  Whenever these conditional distributions
can be sampled efficiently, one can implement an averaged
chain such as $G_{\mathcal O}P$, or the additive mixture
$A_{\mathcal O}=(P+G_{\mathcal O})/2$.  A suitable partition can then
suppress troublesome modes and speed up convergence.  The practical
question is how to choose that partition without searching over all
set partitions of the state space.

This question is motivated by recent work on group-averaged Markov
chains.  \citet{ChoiLimWang2025GroupAveragedII} study orbit-based Gibbs,
Metropolis--Hastings, and Barker averaging kernels and develop theory
and heuristics for tuning the group action.  \citet{LimChoi2026TwoBlock}
formulate optimal two-block selection under Kullback--Leibler and
Frobenius criteria and give combinatorial approximation schemes, while
\citet{LimChoi2026Additive} study additive averaging kernels and the
joint roles of the partition and mixture weight.  These results show
the potential of averaging but also make partition selection a central
computational bottleneck.  The present paper addresses this bottleneck
with spectral rounding algorithms.

Our starting objective is the squared $\pi$-weighted Frobenius one-step distance to stationarity given by
\[
    F(\mathcal O)=\|G_{\mathcal O}P-\Pi\|_{F,\pi}^2,
\]
where $\Pi(x,y) = \pi(y)$ is the stationary transition matrix with each row being $\pi$.  The projection structure of
$G_{\mathcal O}$ gives
$F(\mathcal O)=\operatorname{Tr}((G_{\mathcal O}-\Pi)P^2)$, so its
continuous relaxation is governed by the bottom nonconstant
eigenfunctions of $P^2$.  We round this eigenspace to a block-constant
subspace, thereby obtaining a partition and hence an averaging kernel.
For additive mixtures, the corresponding relaxation instead uses the
algebraically smallest eigenfunctions of $P$.  The punch line is that
eigenfunctions of the original chain are used not only to diagnose
convergence, but to construct a new, simulable kernel intended to mix
faster.


Eigenfunctions of Markov processes have also been studied directly
for the geometric and probabilistic information that they encode.
For finite birth--death processes and more generally, Markov
processes on trees, \citet{Miclo2008EigenfunctionsTrees} describes the
nodal and monotonicity structure of nonconstant eigenfunctions.
In a discrete absorbing setting,
\citet{DiaconisMiclo2016DirichletAmplitude} bound the amplitude of the
positive first Dirichlet eigenvector and relate these estimates to
convergence toward quasi-stationarity.  Our use of eigenfunctions is
complementary: we treat bottom eigenfunctions as optimization
coordinates, round them into a state-space partition, and use that
partition to construct an averaging kernel.

The use of Frobenius distance as a Markov-chain design criterion also
appears in recent work on nearest reversible chains.
\citet{DurastanteGnazzoMeini2025NearestReversible} seek, among kernels
with a prescribed stationary distribution, a reversible transition
matrix nearest to a given kernel in Frobenius norm, and solve the
resulting problem by Riemannian optimization.
\citet{CipollaDurastanteGnazzoMeini2026NearestReversible} incorporate
sparsity constraints and formulate the corresponding matrix-nearness
problem as a quadratic program.  Their decision variable is the
transition matrix and their aim is to enforce reversibility, whereas
our baseline $P$ is already reversible: we optimize over partitions
that induce averaging projections, with the aim of accelerating
mixing. 

State-space partitioning by spectral clustering has also been used to
parallelize MCMC.  \citet{BasseSmithPillai2016} use conventional
spectral clusters to define restricted chains on different regions and
combine their weighted estimates, obtaining improvements for
multimodal targets and over naive parallelization.  Their construction
seeks slowly communicating regions that can be explored separately,
whereas our partition defines an averaging projection within the
transition kernel and, as explained next, favors large rather than
small cross-block flow.

This spectral orientation is opposite to classical normalized
spectral clustering \citep{ShiMalik2000,NgJordanWeiss2001}.  Classical
methods use so-called low-frequency eigenfunctions of $I-P^2$, equivalently the
largest nonconstant eigenfunctions of $P^2$, to identify clusters with
small boundary.  Our Frobenius objective maximizes a multiway
normalized cut of $P^2$ and therefore uses its smallest eigenfunctions,
which are high-frequency directions of $I-P^2$.  The algorithms may
look similar (thresholding in one dimension and geometric rounding in
higher dimensions), but their objectives and guarantees are reversed.

The main contributions are as follows.

\begin{itemize}
    \item For two blocks, we show that thresholding a bottom
    eigenfunction of $P^2$ solves the associated one-dimensional
    weighted two-means problem.  We then bound the suboptimality of the
    spectral sweep using its rounding error, eigengaps, and spectral
    width.

    \item More generally for $k \geq 2$ blocks, we embed the states with the bottom $k-1$
    eigenfunctions of $P^2$ and round by $\pi$-weighted $k$-means.  Its
    distortion is exactly a projection distance, equivalently a sum
    of squared principal-angle sines, and spectral-sandwich bounds
    transfer this geometric error to the original objective.

    \item We extend the framework to additive mixtures using the
    bottom eigenfunctions of $P$ and to finite- and discounted
    infinite-horizon criteria defined by spectral filters of $P^2$. Numerical experiments on a controlled-spectrum dumbbell graph and heat-bath chains for a mean-field Ising model show that these
    criteria can select different partitions and substantially improve
    convergence when the averaging updates are available.  An exact
    Bayesian variable-selection benchmark further connects this
    improvement to statistical estimation and demonstrates the need
    for balanced partitions.
\end{itemize}

The remainder of the paper is organized as follows.
Section~\ref{sec:preliminaries} fixes the notation and develops the
spectral relaxation.  Sections~\ref{sec:two-block} and
\ref{sec:k-block} treat the two-block and general $k$-block problems,
respectively.  Section~\ref{sec:additive-mixture} develops the
additive-mixture analogue, and Section~\ref{sec:multi-horizon}
introduces multi-horizon objectives.  Section~\ref{sec:numerical-experiments}
presents experiments on a controlled-spectrum graph, a mean-field
Ising model, and Bayesian variable selection with block-mass
constraints.

\section{Preliminaries}
\label{sec:preliminaries}

For $a,b\in\mathbb Z$ with $a\leq b$, we define
$\llbracket a,b\rrbracket:=\{a,a+1,\ldots,b\}$, and for
$N\in\mathbb N$ we denote
$\llbracket N\rrbracket:=\llbracket 1,N\rrbracket$.

\subsection{Reversible Markov kernels}

Let $\X$ be a finite state space with $n=|\X| \geq 3$, and let $\pi$ be a
strictly positive probability distribution on $\X$.  For
$A\subseteq\X$, let $\1_A$ denote its indicator function, and write
$\1:=\1_\X$ for the constant-one function.  Let
$\ell^2(\pi)$ be the space of real-valued functions on $\X$, equipped
with the inner product and norm
\[
    \langle f,g\rangle_\pi
    =\sum_{x\in\X}\pi(x)f(x)g(x),
    \qquad
    \|f\|_\pi^2=\langle f,f\rangle_\pi.
\]
We identify a Markov kernel $P$ with the operator
\[
    (Pf)(x)=\sum_{y\in\X}P(x,y)f(y).
\]
Throughout, $P$ is assumed to be ergodic (that is, irreducible and
aperiodic), stationary with respect to $\pi$, and reversible.  Thus,
for all $x,y\in\X$, the detailed balance conditions are satisfied with
\[
    \pi(x)P(x,y)=\pi(y)P(y,x).
\]
As a result $P$ is self-adjoint on $\ell^2(\pi)$, its eigenvalues are real and
belong to $[-1,1]$, and $P\1=\1$.

The stationary projection is
\[
    \Pi f=\pi(f)\1,
    \qquad
    \pi(f)=\sum_{x\in\X}\pi(x)f(x).
\]
The centered subspace is
\begin{equation}
    \ell^2_0(\pi)
    :=\{f\in\ell^2(\pi):\pi(f)=0\}
    =\{f\in\ell^2(\pi):\langle f,\1\rangle_\pi=0\}.
    \label{eq:centered-subspace}
\end{equation}
Equivalently, $\ell^2_0(\pi)=\ker(\Pi)$.
As a Markov kernel, $\Pi(x,y)=\pi(y)$.  It satisfies
\[
    \Pi^2=\Pi,
    \qquad
    P\Pi=\Pi P=\Pi.
\]
For later use, if $u,v\in\ell^2(\pi)$, define the rank-one operator
\[
    (u\otimes_\pi v)f:=u\langle v,f\rangle_\pi.
\]

\subsection{Gibbs kernels associated with partitions}

Let
$
    \mathcal{O}=\{O_1,\ldots,O_k\}
$
be a partition of $\X$ into $k \in \llbracket n \rrbracket$ nonempty blocks, that is, $\mathcal{X} = \sqcup_{i=1}^k O_i$.  The associated Gibbs kernel induced by $\mathcal{O}$ is defined to be, for $f \in \ell^2(\pi)$,
\begin{equation*}
    (G_{\mathcal O}f)(x)
    :=\sum_{i=1}^k \1_{O_i}(x)
      \frac{1}{\pi(O_i)}
      \sum_{y\in O_i}\pi(y)f(y).
\end{equation*}
Equivalently, if $x,y\in\X$, then
\[
    G_{\mathcal O}(x,y)
    =\sum_{i=1}^k
      \1_{O_i}(x)\1_{O_i}(y)\frac{\pi(y)}{\pi(O_i)}.
\]
The operator $G_{\mathcal O}$ replaces a function by its $\pi$-mean
on each block.  It is therefore the orthogonal projection onto the
$k$-dimensional space of functions that are constant on every block.
Kernels of this form arise as orbit Gibbs kernels in group-averaged
Markov chains; their structural properties and the tuning of their
partitions are studied by
\citet{ChoiLimWang2025GroupAveragedII,LimChoi2026TwoBlock,LimChoi2026Additive}.
In particular,
\[
    G_{\mathcal O}^*=G_{\mathcal O},
    \qquad
    G_{\mathcal O}^2=G_{\mathcal O},
    \qquad
    G_{\mathcal O}\Pi=\Pi G_{\mathcal O}=\Pi.
\]
For a two-block partition $\{S,S^c\}$, we write $G_S$ in place of
$G_{\{S,S^c\}}$.

We use throughout the centered block projection and its range
\begin{equation}
    Q_{\mathcal O}:=G_{\mathcal O}-\Pi,
    \qquad
    \mathcal V_{\mathcal O}:=\operatorname{Ran}(Q_{\mathcal O}).
    \label{eq:QO-VO}
\end{equation}
Here $\operatorname{Ran}$ denotes the range, or image, of an operator.
The operator $Q_{\mathcal O}$ is the orthogonal projection onto the
$(k-1)$-dimensional subspace, that is, 
\[
    \mathcal V_{\mathcal O}
    =
    \{f\in\ell^2_0(\pi):
      f\text{ is constant on every }O_i\}.
\]

\subsection{The weighted Frobenius objective and its information-theoretic interpretation}

For an operator $A$ on $\ell^2(\pi)$, define its weighted Frobenius, or
Hilbert--Schmidt, norm by
\[
    \|A\|_{F,\pi}^2:=\operatorname{Tr}(A^*A).
\]
When $A$ is a matrix, this can equivalently be expressed in terms of the entries of $A$ as
\[
    \|A\|_{F,\pi}^2
    =\sum_{x,y\in\X}
      \frac{\pi(x)}{\pi(y)}A(x,y)^2.
\]

For any operator $B$ on $\ell^2(\pi)$ and sets
$A,D\subseteq\X$, write
\begin{equation}
    \mathcal Q_B(A,D)
    :=\sum_{x\in A}\sum_{y\in D}\pi(x)B(x,y).
    \label{eq:general-flow}
\end{equation}
When $B$ is a Markov kernel with stationary distribution $\pi$, this
is its stationary flow or the edge measure from $A$ to $D$.  For an entrywise nonnegative matrix
$B$ and a partition
$\mathcal O=\{O_1,\ldots,O_k\}$, write
\begin{equation*}
    \operatorname{Ncut}_B(\mathcal O)
    :=\sum_{i=1}^k
      \frac{\mathcal Q_B(O_i,O_i^c)}{\pi(O_i)}.
\end{equation*}
If $B\1=c\1$, the normalized block indicators give
\[
    \operatorname{Tr}(G_{\mathcal O}B)
    =\sum_{i=1}^k
      \frac{\mathcal Q_B(O_i,O_i)}{\pi(O_i)}
    =ck-\operatorname{Ncut}_B(\mathcal O).
\]
Since $\operatorname{Tr}(\Pi B)=c$, it follows that
\begin{equation}
    \operatorname{Tr}(Q_{\mathcal O}B)
    =c(k-1)-\operatorname{Ncut}_B(\mathcal O).
    \label{eq:general-trace-cut}
\end{equation}
In the sequel, we will take $B$ to be either $P$ or $P^2$ or the
filtered operators $B_w^T$ or $B_\gamma^\infty$ to be introduced in Section \ref{sec:multi-horizon} later.

The objective studied in subsequent sections is
\begin{equation}
    F(\mathcal O)
    :=\|G_{\mathcal O}P-\Pi\|_{F,\pi}^2.
    \label{eq:F-definition}
\end{equation}
The same value is obtained from $PG_{\mathcal O}$ because the two
operators are adjoints of each other and the assumption that $P$ is $\pi$-reversible.

Since $G_{\mathcal O}-\Pi$ is an orthogonal projection and
$G_{\mathcal O}P-\Pi=(G_{\mathcal O}-\Pi)P$, cyclicity of the trace
gives the useful identity
\begin{equation}
    F(\mathcal O)
    =\operatorname{Tr}\bigl((G_{\mathcal O}-\Pi)P^2\bigr).
    \label{eq:F-trace}
\end{equation}

Since $P^2\1=\1$, the general trace--cut identity
\eqref{eq:general-trace-cut} gives
\begin{equation}
    F(\mathcal O)
    =k-1-\operatorname{Ncut}_{P^2}(\mathcal O).
    \label{eq:F-cut}
\end{equation}
Thus minimizing $F$ is the same as maximizing the multiway
normalized-cut objective of the two-step chain $P^2$.

Although \eqref{eq:F-definition} introduces $F$ through a squared
weighted Frobenius norm, the same objective has an exact probabilistic
and information-theoretic interpretation.  For probability
distributions $\mu$ and $\nu$ on the same finite set, with $\nu$
strictly positive, define the Pearson chi-square divergence by
\[
    \chi^2(\mu\|\nu)
    :=\sum_y\frac{(\mu(y)-\nu(y))^2}{\nu(y)}.
\]
For finite-valued random variables $U$ and $V$, define their Pearson
chi-square mutual information by
\[
    I_{\chi^2}(U;V)
    :=\chi^2\!\left(
        \mathcal L(U,V)
        \,\middle\|\,
        \mathcal L(U)\otimes\mathcal L(V)
      \right),
\]
where $\mathcal L$ denotes probability law and $\otimes$ denotes the
product of probability measures.  Explicitly,
\[
\bigl(\mathcal L(U)\otimes\mathcal L(V)\bigr)(u,v)
=
\mathbb P(U=u)\mathbb P(V=v).
\]

Let $(X_t)_{t \in \mathbb{N}\cup\{0\}}$ be a stationary Markov chain (i.e. $X_0\sim\pi$) with transition
kernel $P$, and let $Z\in\llbracket k\rrbracket$ record the block
containing the initial state:
\[
    Z=i
    \quad\Longleftrightarrow\quad
    X_0\in O_i.
\]
We denote $\pi_i:=\pi(\,\cdot\mid O_i)$.  Then
\[
    \mathcal L(X_1)=\pi,
    \qquad
    \mathcal L(X_1\mid Z=i)=\pi_iP,
\]
and, for every $x\in O_i$,
$(G_{\mathcal O}P)(x,\cdot)=\pi_iP$.  The entrywise expression for
the weighted Frobenius norm therefore gives
\[
\begin{aligned}
    F(\mathcal O)
    &=\sum_{i=1}^k \pi(O_i)
      \sum_{y\in\X}
      \frac{\bigl((\pi_iP)(y)-\pi(y)\bigr)^2}{\pi(y)}\\
    &=\sum_{i=1}^k \pi(O_i)\,\chi^2(\pi_iP\|\pi).
\end{aligned}
\]
On the other hand,
$\mathbb P(Z=i,X_1=y)=\pi(O_i)(\pi_iP)(y)$, whereas the corresponding
product of marginals is $\pi(O_i)\pi(y)$.  Hence
\begin{equation}
    F(\mathcal O)
    =\sum_{i=1}^k \pi(O_i)\,\chi^2(\pi_iP\|\pi)
    =I_{\chi^2}(Z;X_1).
    \label{eq:F-chi-square-information}
\end{equation}
Thus $F(\mathcal O)$ is the residual Pearson chi-square information
that the state after one transition retains about the initial block
label.   In particular,
$F(\mathcal O)=0$ if and only if $Z$ and $X_1$ are independent.
Moreover, since $Z$ is determined by $X_0$,
$I_{\chi^2}(Z;X_0)=k-1$.  Combining this fact with
\eqref{eq:F-cut} gives
\[
    \operatorname{Ncut}_{P^2}(\mathcal O)
    =I_{\chi^2}(Z;X_0)-I_{\chi^2}(Z;X_1).
\]
Thus the normalized-cut term is exactly the amount of chi-square
information about the initial block that is lost after one
transition.  The identity \eqref{eq:F-chi-square-information} uses
stationarity but not reversibility; reversibility is used in the
trace and $P^2$ cut representations above.


\subsection{Eigenvalues and the spectral relaxation}\label{subsec:spectralrelax}

Let
\[
0\leq\eta_1\leq\eta_2\leq\cdots\leq\eta_{n-1}<1
\]
be the eigenvalues of $P^2$ on the centered subspace
$\ell^2_0(\pi)$ defined in \eqref{eq:centered-subspace}, listed in
nondecreasing order and counting multiplicity.  Choose corresponding
$\pi$-orthonormal eigenfunctions
$\phi_1,\ldots,\phi_{n-1}$.  Since $P$ is self-adjoint and commutes
with $P^2$, we choose these functions to be eigenfunctions of $P$ as
well.  Equivalently, the $\eta_j$ are the squares of the non-unit
eigenvalues of $P$, counted with multiplicity and rearranged in
nondecreasing order.

For $k\in\llbracket 2,n\rrbracket$, the variational principle for sums of
eigenvalues gives
\begin{equation}
	L_{k-1}:=\sum_{j=1}^{k-1}\eta_j
	=\min_{\substack{U\subset\ell^2_0(\pi)\\ \dim(U)=k-1}}
	\sum_{j=1}^{k-1}\langle u_j,P^2u_j\rangle_\pi,
	\label{eq:Ky-Fan}
\end{equation}
where $u_1,\ldots,u_{k-1}$ form an orthonormal basis of $U$.  For a
$k$-block partition $\mathcal O$, recall from
\eqref{eq:QO-VO} that $Q_{\mathcal O}$ projects onto the
$(k-1)$-dimensional subspace $\mathcal V_{\mathcal O}$.  If
$u_1,\ldots,u_{k-1}$ is an orthonormal basis of
$\mathcal V_{\mathcal O}$, then
\[
Q_{\mathcal O}
=\sum_{j=1}^{k-1}u_j\otimes_\pi u_j.
\]
It follows from the trace identity for $F$ that
\begin{align*}
	F(\mathcal O)
	&=\operatorname{Tr}(Q_{\mathcal O}P^2)\\
	&=\sum_{j=1}^{k-1}
	\operatorname{Tr}\bigl(
	(u_j\otimes_\pi u_j)P^2
	\bigr)\\
	&=\sum_{j=1}^{k-1}
	\langle u_j,P^2u_j\rangle_\pi.
\end{align*}
The minimum in \eqref{eq:Ky-Fan} is taken over all
$(k-1)$-dimensional subspaces of $\ell^2_0(\pi)$, whereas the
partition problem ranges only over the restricted family
$\{\mathcal V_{\mathcal O}:|\mathcal O|=k\}$.  Therefore
$L_{k-1}\leq F(\mathcal O)$ for every $k$-block partition
$\mathcal O$, and hence
\begin{equation}
	L_{k-1}\leq
	\min_{|\mathcal O|=k}F(\mathcal O).
	\label{eq:relaxation-lower-bound}
\end{equation}

\paragraph{Classical spectral clustering.}
For comparison, classical normalized spectral clustering uses the
bottom eigenfunctions of the Laplacian $I-P^2$, equivalently the
largest nonconstant eigenfunctions of $P^2$, to find blocks with small boundary.
In the two-block case, the eigenfunction is sorted and thresholded.  In
the multiway case, several eigenfunctions are used to embed the states,
after which a geometric rounding procedure is applied
\citep{NgJordanWeiss2001}.  Our problem has the opposite trace
orientation: it uses the smallest eigenvalues of $P^2$ and seeks blocks
with large two-step normalized boundary.

\section{The two-block case}
\label{sec:two-block}

We begin with the two-block problem, where the centered Gibbs
projection has rank one and the rounding problem reduces to
one-dimensional weighted two-means.  Section~\ref{sec:k-block}
extends the construction to general $k$.

\subsection{The rank-one representation and the relaxed problem}

Let $\varnothing\neq S\subsetneq\X$.  Define
\begin{equation*}
	z_S
	:=\sqrt{\frac{1-\pi(S)}{\pi(S)}}\,\1_S
	-\sqrt{\frac{\pi(S)}{1-\pi(S)}}\,\1_{S^c}.
\end{equation*}
Then
\[
\langle z_S,\1\rangle_\pi=0,
\qquad
\|z_S\|_\pi=1.
\]
The functions $\1$ and $z_S$ form an orthonormal basis of the space of
functions that are constant on $S$ and on $S^c$.  Since $G_S$ is the
orthogonal projection onto that space,
\begin{equation}
    G_S=\Pi+z_S\otimes_\pi z_S.
    \label{eq:GS-rank-one}
\end{equation}
This proves the rank-one identity directly.

Throughout this section, write $F(S):=F(\{S,S^c\})$.  Equations
\eqref{eq:F-trace} and \eqref{eq:GS-rank-one} give
\begin{equation*}
    F(S)
    =\langle z_S,P^2z_S\rangle_\pi
    =\|Pz_S\|_\pi^2.
\end{equation*}
Let
\[
    F_2^*:=\min_{\varnothing\neq S\subsetneq\X}F(S).
\]
Removing the restriction that the unit vector must have the form
$z_S$ and using \eqref{eq:relaxation-lower-bound} give
\[
\eta_1
=
\min_{\substack{f\in\ell^2_0(\pi)\\ \|f\|_\pi=1}}
\langle f,P^2f\rangle_\pi
\leq F_2^*.
\]
Let $\phi=\phi_1$ be a normalized eigenfunction associated with
$\eta_1$. The relaxed optimizer $\phi$ is a real-valued function rather than a
set, so the remaining task is to round its values to a two-block
partition.

\subsection{Rounding as weighted two-means}

For a nontrivial set $\varnothing\neq S\subsetneq\X$, define
\begin{equation}
    \delta(S):=\|(I-G_S)\phi\|_\pi^2,
    \qquad
    \delta_2(\phi):=
      \min_{\varnothing\neq S\subsetneq\X}\delta(S).
    \label{eq:delta-two}
\end{equation}
Because $\phi\perp\1$, equation \eqref{eq:GS-rank-one} implies
\begin{equation*}
    \delta(S)
    =1-\langle\phi,z_S\rangle_\pi^2.
\end{equation*}

Let
\[
    \overline\phi_S
    :=\frac{1}{\pi(S)}\sum_{x\in S}\pi(x)\phi(x),
    \qquad
    \overline\phi_{S^c}
    =\frac{1}{\pi(S^c)}\sum_{x\in S^c}\pi(x)\phi(x).
\]
Since $G_S\phi$ takes these two values,
\begin{equation}
\begin{split}
    \delta(S)
    ={}&\sum_{x\in S}\pi(x)
        \bigl(\phi(x)-\overline\phi_S\bigr)^2\\
       &+\sum_{x\in S^c}\pi(x)
        \bigl(\phi(x)-\overline\phi_{S^c}\bigr)^2.
\end{split}
    \label{eq:weighted-two-means}
\end{equation}
Thus $\delta_2(\phi)$ is exactly the weighted two-means error for the
one-dimensional points $\phi(x)$ with weights $\pi(x)$.

For a function $\phi:\X\to\mathbb R$, we call a nonempty proper set
$S\subsetneq\X$ a \emph{threshold set of $\phi$} if there exists
$\tau\in\mathbb R$ such that
\begin{equation*}
	\{x\in\X:\phi(x)<\tau\}
	\subseteq S
	\subseteq
	\{x\in\X:\phi(x)\leq\tau\}.
\end{equation*}
Thus all states whose $\phi$-values are strictly below the threshold
belong to $S$, all states whose values are strictly above it belong to
$S^c$, and states satisfying $\phi(x)=\tau$ may be assigned to either
side.

The one-dimensional structure ensures that restricting attention to
threshold sets does not increase the optimal rounding error.

\begin{lemma}[Threshold property]
\label{lem:threshold-property}
There is a minimizer of $\delta(S)$ that is a threshold set of $\phi$.
\end{lemma}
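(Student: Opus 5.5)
Since $\X$ is finite, there are finitely many nonempty proper sets $S$, so $\delta$ attains its minimum. Let $S^\ast$ be a minimizer. I will show that either $S^\ast$ or $(S^\ast)^c$ is a threshold set of $\phi$, or that $S^\ast$ can be replaced by a threshold set with no larger value of $\delta$. The argument is the standard one-dimensional $k$-means exchange argument, applied to the representation \eqref{eq:weighted-two-means}.

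Write $a=\overline\phi_{S^\ast}$ and $b=\overline\phi_{(S^\ast)^c}$. Start from the variational form
\[
\delta(S)=\min_{c_1,c_2\in\mathbb R}\Bigl(\sum_{x\in S}\pi(x)(\phi(x)-c_1)^2+\sum_{x\in S^c}\pi(x)(\phi(x)-c_2)^2\Bigr),
\]
whose minimum is attained at the weighted block means. Fix the centers $c_1=a$ and $c_2=b$ and reassign each state to the nearer center. Let $S'$ be the set of states with $(\phi(x)-a)^2\le(\phi(x)-b)^2$, with ties broken so that the assignment agrees with $S^\ast$ where possible. Then the fixed-center cost of $S'$ is at most that of $S^\ast$, which equals $\delta(S^\ast)$. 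Re-optimizing the centers then gives $\delta(S')\le\delta(S^\ast)$.

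Now identify $S'$ as a threshold set. If $a<b$, the condition $(\phi(x)-a)^2\le(\phi(x)-b)^2$ is equivalent to $\phi(x)\le (a+b)/2$, so $S'$ lies between the strict and non-strict sublevel sets at $\tau=(a+b)/2$. If $a>b$, the same argument shows that $(S')^c$ is a sublevel set. Since $\delta(S)=\delta(S^c)$ by symmetry of \eqref{eq:weighted-two-means}, we may replace $S'$ by its complement in that case. If $a=b$, then $a=b=0$ because $\phi\perp\1$, so $\delta(S^\ast)=\|\phi\|_\pi^2=1$. This is the largest possible value, since $\delta(S)=1-\langle\phi,z_S\rangle_\pi^2\le1$, and it is equal to $\delta$ of every set. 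Hence any threshold set, for example a sublevel set at a suitable $\tau$, is also a minimizer.

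The main obstacle is ensuring that $S'$ is nonempty and proper, so that $\delta(S')$ is defined. I would handle this with the tie-breaking rule. Suppose $a<b$ and $S'$ is empty. Then every state is strictly closer to $b$, so in particular every $x\in S^\ast$ satisfies $\phi(x)>(a+b)/2>a$, which is impossible because $a$ is the weighted mean of $\phi$ over $S^\ast$. The case in which $S'$ is all of $\X$ is symmetric. More simply, the point attaining $\min_{S^\ast}\phi\le a$ is always weakly closer to $a$ than to $b$ and stays on its side, and similarly for $(S^\ast)^c$, so both blocks remain nonempty. Thus $S'$ is a threshold set minimizing $\delta$.
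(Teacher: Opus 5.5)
Your proof is correct and follows essentially the same route as the paper's: fix the two block means, reassign each state to the nearer mean so that the split is a threshold at the midpoint, check that both blocks stay nonempty using a point of each block lying on the far side of its own mean, and then re-optimize the means. The only minor difference is the equal-means case. The paper rules it out by exhibiting a singleton $\{x\}$ with $\phi(x)\neq0$ and strictly smaller error. You instead note that minimality together with $\delta\le1$ forces $\delta\equiv1$, so every threshold set is a minimizer. Both arguments are valid.
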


\begin{proof}
Take a partition minimizing \eqref{eq:weighted-two-means}, and denote
its two means by $m_1\leq m_2$.  They are distinct.  Indeed, if they
were equal, centeredness of $\phi$ would force both means to be zero
and the error would be one; choosing a singleton $\{x\}$ with
$\phi(x)\neq0$ gives a strictly smaller error.  Keeping the means fixed, assigning a
number $u$ to its nearest mean cannot increase the objective.  The
first mean is nearest exactly when
\[
    (u-m_1)^2\leq(u-m_2)^2,
\]
which is equivalent to
\[
    u\leq\frac{m_1+m_2}{2}.
\]
The resulting blocks are therefore separated by one threshold.
They are nonempty: a block with mean $m_1$ contains a value at most
$m_1$, and a block with mean $m_2$ contains a value at least $m_2$.
Recomputing their weighted means can only decrease the objective.
Since the original partition was optimal, the threshold partition is
also optimal.  Equal values may all be placed on either side of the
threshold.  This is the standard interval property of one-dimensional
$k$-means \citep{NielsenNock2014,SongZhong2020}.
\end{proof}

\subsection{The spectral sweep algorithm}

Order the states so that
\[
    \phi(x_{(1)})\leq\phi(x_{(2)})\leq\cdots
    \leq\phi(x_{(n)}),
\]
and define
\[
    S_j:=\{x_{(1)},\ldots,x_{(j)}\},
    \qquad j\in\llbracket n-1\rrbracket.
\]
The two-block spectral sweep is given in
Algorithm~\ref{alg:two-block-sweep}.

\begin{algorithm}[H]
\caption{Two-block spectral sweep}
\label{alg:two-block-sweep}
\begin{enumerate}
    \item Compute a normalized bottom eigenfunction $\phi$ of $P^2$, that is, choose any normalized bottom eigenfunction
    $\phi\in\ell^2_0(\pi)$ satisfying
    \[
    P^2\phi=\eta_1\phi,
    \qquad
    \|\phi\|_\pi=1.
    \]
    \item Sort the states by $\phi(x)$ and form the sets $S_j$.
    \item Evaluate the original objective $F(S_j)$ and return
    \[
        \widehat S\in
        \operatorname*{argmin}_{j\in\llbracket n-1\rrbracket}F(S_j).
    \]
    For later use in Section \ref{sec:numerical-experiments}, we write
    \[
    \widehat{\mathcal O}_2
    :=\{\widehat S,\widehat S^c\}.
    \]
\end{enumerate}
\end{algorithm}

Sorting is needed because it lists all one-dimensional Voronoi splits
of the values $\phi(x)$.  Lemma \ref{lem:threshold-property} implies
\begin{equation}
    \delta_2(\phi)=\min_{j\in\llbracket n-1\rrbracket}\delta(S_j).
    \label{eq:delta-sweep}
\end{equation}
A non-threshold set is still an admissible cut and may have a smaller
value of $F$.  The sweep does not claim otherwise.  Rather, threshold
sets are sufficient to obtain the following rounding guarantee.

\subsection{Guarantee for the two-block sweep}

\begin{theorem}[Two-block sweep guarantee]
\label{thm:two-block-guarantee}
Suppose $n\geq3$ and recall that $\delta_2$ is defined in \eqref{eq:delta-two}. The output $\widehat S$ of
Algorithm~\ref{alg:two-block-sweep} satisfies
\begin{equation}
\begin{split}
    \eta_1+(\eta_2-\eta_1)\delta_2(\phi)
    \leq F_2^*&\leq F(\widehat S)
    \leq
    \eta_1+(\eta_{n-1}-\eta_1)\delta_2(\phi).
\end{split}
    \label{eq:two-block-chain}
\end{equation}
Consequently,
\begin{equation}
    0\leq F(\widehat S)-F_2^*
    \leq(\eta_{n-1}-\eta_2)\delta_2(\phi).
    \label{eq:two-block-gap}
\end{equation}
If $\eta_2>\eta_1$, then also
\begin{equation}
    F(\widehat S)-\eta_1
    \leq
    \frac{\eta_{n-1}-\eta_1}{\eta_2-\eta_1}
    (F_2^*-\eta_1).
    \label{eq:two-block-relative}
\end{equation}
\end{theorem}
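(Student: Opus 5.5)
The plan is to expand $F(S)=\langle z_S,P^2z_S\rangle_\pi$ in the eigenbasis $\phi_1,\ldots,\phi_{n-1}$ of $\ell^2_0(\pi)$. Since $z_S\in\ell^2_0(\pi)$ and $\|z_S\|_\pi=1$, write $c_j:=\langle z_S,\phi_j\rangle_\pi$, so that $\sum_j c_j^2=1$ and $c_1^2=\langle\phi,z_S\rangle_\pi^2=1-\delta(S)$. Then
\[
F(S)=\sum_{j=1}^{n-1}\eta_j c_j^2
=\eta_1 c_1^2+\sum_{j\geq2}\eta_j c_j^2 .
\]
Because $\sum_{j\geq2}c_j^2=\delta(S)$, the second sum lies between $\eta_2\delta(S)$ and $\eta_{n-1}\delta(S)$. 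This gives the per-set sandwich
\[
\eta_1+(\eta_2-\eta_1)\delta(S)\leq F(S)\leq \eta_1+(\eta_{n-1}-\eta_1)\delta(S)
\]
for every nontrivial $S$. The case $n\geq3$ guarantees that $\eta_2$ exists.

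For the lower bound in \eqref{eq:two-block-chain}, take a minimizer $S^*$ of $F$. The per-set lower bound gives $F_2^*\geq\eta_1+(\eta_2-\eta_1)\delta(S^*)$. Since $\eta_2\geq\eta_1$ and $\delta(S^*)\geq\delta_2(\phi)$, this is at least $\eta_1+(\eta_2-\eta_1)\delta_2(\phi)$. The middle inequality $F_2^*\leq F(\widehat S)$ is trivial.

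For the upper bound, use \eqref{eq:delta-sweep}, which rests on Lemma~\ref{lem:threshold-property}. It supplies an index $j_0$ with $\delta(S_{j_0})=\delta_2(\phi)$. Because $\widehat S$ minimizes $F$ over the sweep sets, $F(\widehat S)\leq F(S_{j_0})\leq\eta_1+(\eta_{n-1}-\eta_1)\delta_2(\phi)$. One point needs care. Lemma~\ref{lem:threshold-property} allows tied values to go to either side, whereas the $S_j$ come from one fixed sort order. Its proof, however, shows that all tied values may be placed together on one side, and every such threshold set is a prefix $S_j$ of the sorted list. So this step is covered, and I expect this tie-handling to be the only subtle point.

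The consequences are then direct. Subtracting the outer bounds of \eqref{eq:two-block-chain} gives \eqref{eq:two-block-gap}, and nonnegativity is just $F_2^*\leq F(\widehat S)$. For \eqref{eq:two-block-relative}, assume $\eta_2>\eta_1$. The lower bound gives $\delta_2(\phi)\leq (F_2^*-\eta_1)/(\eta_2-\eta_1)$. Substituting this into the upper bound $F(\widehat S)-\eta_1\leq(\eta_{n-1}-\eta_1)\delta_2(\phi)$ yields \eqref{eq:two-block-relative}.
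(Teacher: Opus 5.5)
Your proposal is correct and follows the paper's proof essentially step for step. You expand $z_S$ in an eigenbasis with $\phi_1=\phi$, derive the per-set sandwich $\eta_1+(\eta_2-\eta_1)\delta(S)\leq F(S)\leq\eta_1+(\eta_{n-1}-\eta_1)\delta(S)$, minimize the lower bound over all $S$, use a sweep set attaining $\delta_2(\phi)$ for the upper bound, and then subtract and substitute. Your remark on ties is a useful addition: the nearest-mean construction in Lemma~\ref{lem:threshold-property} yields $\{\phi\leq\tau\}$, which is a prefix $S_j$ of any nondecreasing sort, and the paper leaves this implicit in \eqref{eq:delta-sweep}.
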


\begin{proof}
Choose a $\pi$-orthonormal eigenbasis
$\phi_1,\ldots,\phi_{n-1}$ of $P^2$ on $\ell^2_0(\pi)$
whose first vector is the eigenfunction $\phi$ selected by
Algorithm~\ref{alg:two-block-sweep}.
For a nontrivial $S$, expand
\[
    z_S=\sum_{j=1}^{n-1}c_j\phi_j,
    \qquad
    c_j=\langle z_S,\phi_j\rangle_\pi.
\]
Since $z_S$ is a unit vector, $\sum_jc_j^2=1$.  Moreover,
\[
    \delta(S)=1-c_1^2=\sum_{j=2}^{n-1}c_j^2.
\]
It follows that
\begin{align*}
    F(S)
    &=\sum_{j=1}^{n-1}\eta_jc_j^2\\
    &=\eta_1+
      \sum_{j=2}^{n-1}(\eta_j-\eta_1)c_j^2.
\end{align*}
Therefore, for every nontrivial $S$,
\begin{equation*}
    \eta_1+(\eta_2-\eta_1)\delta(S)
    \leq F(S)
    \leq
    \eta_1+(\eta_{n-1}-\eta_1)\delta(S).
\end{equation*}

The lower bound, minimized over all $S$, gives the first inequality in
\eqref{eq:two-block-chain}.  By \eqref{eq:delta-sweep}, a sweep set
attains $\delta_2(\phi)$.  Since $\widehat S$ has the smallest $F$
among all sweep sets,
\[
    F(\widehat S)
    \leq\eta_1+(\eta_{n-1}-\eta_1)\delta_2(\phi).
\]
This proves \eqref{eq:two-block-chain}.  Subtracting its lower bound
from its upper bound proves \eqref{eq:two-block-gap}.  Finally,
\[
    \delta_2(\phi)
    \leq\frac{F_2^*-\eta_1}{\eta_2-\eta_1}
\]
when $\eta_2>\eta_1$, and substitution in the upper bound proves
\eqref{eq:two-block-relative}.
\end{proof}

When $n=2$, there is only one nontrivial two-block partition, up to
complementation, so the sweep is exact.

\begin{corollary}[A two-valued eigenfunction]
\label{cor:two-valued}
If $\phi$ takes exactly two values, then
\[
    \delta_2(\phi)=0,
    \qquad
    F(\widehat S)=F_2^*=\eta_1.
\]
\end{corollary}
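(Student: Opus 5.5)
The plan is to show directly that $\delta_2(\phi)=0$ and then read off the remaining equalities from the chain \eqref{eq:two-block-chain} in Theorem~\ref{thm:two-block-guarantee}.

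\emph{Step 1: the rounding error vanishes.} Suppose $\phi$ takes exactly two values $a<b$, and set $S:=\{x:\phi(x)=a\}$. Both values are attained, so $\varnothing\neq S\subsetneq\X$. Then $\phi$ is constant on $S$ and on $S^c$, so it lies in the range of the projection $G_S$. Hence $(I-G_S)\phi=0$, which gives $\delta(S)=0$. Since $\delta\geq0$ always, $\delta_2(\phi)=0$. Equivalently, in the weighted two-means form \eqref{eq:weighted-two-means}, each block has zero within-block variance.

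\emph{Step 2: squeeze the chain.} Substituting $\delta_2(\phi)=0$ into \eqref{eq:two-block-chain} gives
\[
\eta_1\leq F_2^*\leq F(\widehat S)\leq \eta_1,
\]
so all three quantities equal $\eta_1$. This step needs no eigengap assumption, because the factors $(\eta_2-\eta_1)$ and $(\eta_{n-1}-\eta_1)$ multiply zero.

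\emph{Step 3: check the sweep actually returns such a set.} The upper bound in Theorem~\ref{thm:two-block-guarantee} already accounts for this through \eqref{eq:delta-sweep}, but it can also be seen directly. After sorting, $S=S_j$ with $j=|S|$, because all $a$-valued states precede all $b$-valued ones. Moreover $\delta(S)=0$ means $z_S=\pm\phi$ (from $\delta(S)=1-\langle\phi,z_S\rangle_\pi^2$ with both vectors of unit norm), so $F(S)=\langle\phi,P^2\phi\rangle_\pi=\eta_1$. I do not expect any real obstacle; the only care needed is confirming that $S$ is nontrivial, which holds because "exactly two values" means both are attained.
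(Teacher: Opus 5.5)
Your proposal is correct and follows essentially the same route as the paper: take a level set $S$ of $\phi$ and note that $G_S\phi=\phi$, so $\delta(S)=0$ and $\phi=\pm z_S$. Hence $F(S)=\langle\phi,P^2\phi\rangle_\pi=\eta_1$ meets the spectral lower bound. Your squeeze through \eqref{eq:two-block-chain} restates the same observation, and your Step 3 makes explicit a fact the paper leaves implicit, namely that this level set is one of the sweep sets $S_j$.
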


\begin{proof}
Let $S$ be one level set of $\phi$.  Then $\phi$ is constant on $S$
and on $S^c$, so $G_S\phi=\phi$ and $\delta(S)=0$.  Equivalently,
$\phi=\pm z_S$.  Hence
$F(S)=\langle\phi,P^2\phi\rangle_\pi=\eta_1$,
which meets the spectral lower bound.
\end{proof}

\subsection{Contrast with the classical Cheeger sweep}

For the two-step chain $P^2$, by recalling $\mathcal Q_{P^2}$ as defined in \eqref{eq:general-flow}, we define its conductance by
\[
    h_{P^2}(S)=
    \frac{\mathcal Q_{P^2}(S,S^c)}
         {\min\{\pi(S),\pi(S^c)\}},
    \qquad
    h_{P^2}^*=\min_{\varnothing\neq S\subsetneq\X}h_{P^2}(S).
\]
Let $\gamma_2$ be the second-smallest eigenvalue of the Laplacian
$I-P^2$, counting multiplicity.  The classical Cheeger sweep sorts a corresponding
low-frequency eigenfunction, evaluates the conductance of its
threshold sets, and returns a set $S_{\mathrm{Ch}}$ satisfying
\begin{equation*}
    \frac{\gamma_2}{2}
    \leq h_{P^2}^*
    \leq h_{P^2}(S_{\mathrm{Ch}})
    \leq\sqrt{2\gamma_2},
\end{equation*}
under the usual normalization \citep{LawlerSokal1988,SinclairJerrum1989}. Here ``low-frequency'' means associated with a small eigenvalue of
the Laplacian $I-P^2$. The proof uses a level-set, or co-area, argument to find a set with
small boundary.

Our sorting step looks similar, but the problem is different.  By
\eqref{eq:F-cut},
\begin{equation*}
    F(S)
    =1-
      \frac{\mathcal Q_{P^2}(S,S^c)}{\pi(S)\pi(S^c)}.
\end{equation*}
Thus minimizing $F$ maximizes, rather than minimizes, the two-way
normalized cut of $P^2$.  We use the bottom eigenfunction of $P^2$,
which is a ``high-frequency'' eigenfunction of $I-P^2$.  Sorting is used to
solve the one-dimensional two-means rounding problem, not to apply the
classical co-area proof.  

\section{The \texorpdfstring{$k$}{k}-block case}
\label{sec:k-block}

In this section, we fix $k\in\llbracket 2,n-1\rrbracket$.  The case $k=n$ has only the
singleton partition and requires no rounding.  Recall that $\phi_1,\ldots,\phi_{k-1}$ are the
$\pi$-orthonormal eigenfunctions associated with the $k-1$
smallest eigenvalues of $P^2$ on $\ell^2_0(\pi)$.  Define the raw
spectral embedding
\begin{equation}
    \Phi(x):=\bigl(\phi_1(x),\ldots,\phi_{k-1}(x)\bigr)
    \in\mathbb{R}^{k-1}.
    \label{eq:spectral-embedding}
\end{equation}
The constant eigenfunction is not included: a $k$-block Gibbs kernel
already contains the constants, so only $k-1$ nonconstant directions
must be rounded.

\subsection{The spectral rounding error and weighted
\texorpdfstring{$k$}{k}-means}

Let
\[
    \mathcal U_{k-1}:=\operatorname{span}\{\phi_1,\ldots,\phi_{k-1}\}.
\]
This is an optimal continuous subspace in the relaxation
\eqref{eq:Ky-Fan} and \eqref{eq:relaxation-lower-bound}.  A genuine $k$-block partition cannot produce an
arbitrary $(k-1)$-dimensional subspace.  It produces the centered
block-constant subspace $\mathcal V_{\mathcal O}$ defined in
\eqref{eq:QO-VO}.  Thus the discrete step replaces
the relaxed spectral subspace $\mathcal U_{k-1}$ by one of the feasible
subspaces $\mathcal V_{\mathcal O}$ indexed by partitions.  This is
the usual relaxation--discretization viewpoint behind spectral
rounding \citep{YuShi2003}; clustering the rows of a spectral embedding
by $k$-means is a standard implementation of that viewpoint
\citep{NgJordanWeiss2001,PengSunZanetti2017}.

Let
\[
E_{k-1}
:=\sum_{j=1}^{k-1}\phi_j\otimes_\pi\phi_j
\]
be the orthogonal projection onto
$\mathcal U_{k-1}$.  Recall that each state $x\in\X$ is represented
in the spectral embedding by the point
$\Phi(x)\in\mathbb R^{k-1}$.  For a partition
$\mathcal O=\{O_1,\ldots,O_k\}$ and each block $O_i$ with
$i\in\llbracket k\rrbracket$, define
\[
\mu_i
:=\frac{1}{\pi(O_i)}
\sum_{x\in O_i}\pi(x)\Phi(x)
\in\mathbb R^{k-1}.
\]
Thus $\mu_i$ is the $\pi$-weighted centroid of the embedded points
$\{\Phi(x):x\in O_i\}$.  Equivalently, its $j$th coordinate is the
stationary conditional mean of $\phi_j$ within $O_i$:
\[
(\mu_i)_j
=\frac{1}{\pi(O_i)}
\sum_{x\in O_i}\pi(x)\phi_j(x),
\qquad j\in\llbracket k-1\rrbracket.
\]
We call
\begin{equation}
    \delta(\mathcal O)
    :=\sum_{j=1}^{k-1}
      \|(I-G_{\mathcal O})\phi_j\|_\pi^2.
    \label{eq:delta-k}
\end{equation}
the \emph{spectral rounding error}.  This is a descriptive name used
here rather than a separate standard objective: the next result shows
that it measures precisely the loss incurred when the continuous
spectral solution is represented by a discrete partition.

\begin{proposition}[Two interpretations of the spectral rounding error]
\label{prop:kmeans-identity}
Let $k\in\llbracket 2,n-1\rrbracket$.
For every $k$-block partition $\mathcal O$,
\begin{equation}
    \delta(\mathcal O)
    =\sum_{i=1}^k\sum_{x\in O_i}
      \pi(x)\|\Phi(x)-\mu_i\|_2^2
    =\frac12\|E_{k-1}-Q_{\mathcal O}\|_{F,\pi}^2
    =\sum_{j=1}^{k-1}\sin^2\theta_j,
    \label{eq:kmeans-identity}
\end{equation}
where $\theta_1,\ldots,\theta_{k-1}$ are the principal angles between
$\mathcal U_{k-1}$ and $\mathcal V_{\mathcal O}$.
\end{proposition}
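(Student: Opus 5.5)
The plan is to prove the three equalities in turn, each time starting from the definition \eqref{eq:delta-k} and using that $G_{\mathcal O}$, $Q_{\mathcal O}$ and $E_{k-1}$ are orthogonal projections on $\ell^2(\pi)$.

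\emph{First equality.} Since $G_{\mathcal O}\phi_j$ is the block-wise $\pi$-mean of $\phi_j$, its value on $O_i$ is exactly $(\mu_i)_j$. Hence
\[
\|(I-G_{\mathcal O})\phi_j\|_\pi^2=\sum_{i=1}^k\sum_{x\in O_i}\pi(x)\bigl(\phi_j(x)-(\mu_i)_j\bigr)^2 .
\]
Summing over $j\in\llbracket k-1\rrbracket$ and exchanging the sums turns the coordinatewise squares into $\|\Phi(x)-\mu_i\|_2^2$. This is the same calculation as \eqref{eq:weighted-two-means}, now done in $k-1$ coordinates.

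\emph{Second equality.} Each $\phi_j$ is centered, so $\Pi\phi_j=0$ and $(I-G_{\mathcal O})\phi_j=(I-Q_{\mathcal O})\phi_j$. Therefore
\[
\delta(\mathcal O)=\sum_j\langle\phi_j,(I-Q_{\mathcal O})\phi_j\rangle_\pi=\operatorname{Tr}\bigl(E_{k-1}(I-Q_{\mathcal O})E_{k-1}\bigr)=(k-1)-\operatorname{Tr}(E_{k-1}Q_{\mathcal O}).
\]
Here we use $(I-Q_{\mathcal O})^2=I-Q_{\mathcal O}$ and the rank-one form of $E_{k-1}$, exactly as in the computation of $F(\mathcal O)$ in Section~\ref{subsec:spectralrelax}. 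Next, $E_{k-1}-Q_{\mathcal O}$ is self-adjoint, so
\[
\|E_{k-1}-Q_{\mathcal O}\|_{F,\pi}^2=\operatorname{Tr}(E_{k-1})+\operatorname{Tr}(Q_{\mathcal O})-2\operatorname{Tr}(E_{k-1}Q_{\mathcal O}).
\]
Both projections have rank $k-1$, so this equals $2(k-1)-2\operatorname{Tr}(E_{k-1}Q_{\mathcal O})=2\delta(\mathcal O)$.

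\emph{Third equality.} Recall the definition of principal angles. Let $u_1,\dots,u_{k-1}$ be an orthonormal basis of $\mathcal V_{\mathcal O}$. Form the $(k-1)\times(k-1)$ matrix $M_{ij}=\langle\phi_i,u_j\rangle_\pi$. Its singular values are $\cos\theta_1,\dots,\cos\theta_{k-1}$. Then
\[
\operatorname{Tr}(E_{k-1}Q_{\mathcal O})=\sum_{i,j}\langle\phi_i,u_j\rangle_\pi^2=\|M\|_F^2=\sum_j\cos^2\theta_j .
\]
Substituting into $\delta(\mathcal O)=(k-1)-\operatorname{Tr}(E_{k-1}Q_{\mathcal O})$ gives $\sum_j\sin^2\theta_j$.

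The only point needing care is this last step. The $\pi$-weighted setting has to be handled correctly, and the trace identity $\operatorname{Tr}(E_{k-1}Q_{\mathcal O})=\|M\|_F^2$ should be checked through the rank-one expansions $E_{k-1}=\sum_i\phi_i\otimes_\pi\phi_i$ and $Q_{\mathcal O}=\sum_j u_j\otimes_\pi u_j$, using
\[
\operatorname{Tr}\bigl((a\otimes_\pi b)(c\otimes_\pi d)\bigr)=\langle b,c\rangle_\pi\langle d,a\rangle_\pi .
\]
To identify the principal angles, apply the singular value decomposition of $M$ to obtain orthonormal bases of the two subspaces with pairwise inner products $\cos\theta_j$. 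The rest is routine bookkeeping.
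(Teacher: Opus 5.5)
Your proposal is correct and follows the paper's proof essentially step for step. Like the paper, you use the block-mean computation for the weighted $k$-means identity, the reduction $\delta(\mathcal O)=k-1-\operatorname{Tr}(E_{k-1}Q_{\mathcal O})$ via centeredness, the Hilbert--Schmidt expansion of two rank-$(k-1)$ projections, and the singular values of the cross-Gram matrix as $\cos\theta_j$. You additionally verify $\operatorname{Tr}(E_{k-1}Q_{\mathcal O})=\|M\|_F^2$ through the rank-one trace formula; the paper leaves this detail implicit, and your check is correct.
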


\begin{proof}
If $x\in O_i$, then $(G_{\mathcal O}\phi_j)(x)$ is the $j$th
coordinate of $\mu_i$.  Expand the left-hand side of
\eqref{eq:delta-k}, and sum the squared error first over the
coordinates and then over the states.  This gives
the first equality in \eqref{eq:kmeans-identity}.

For the subspace identities, every $\phi_j$ is centered, so
\[
    (I-G_{\mathcal O})\phi_j
    =(I-Q_{\mathcal O})\phi_j.
\]
Since $Q_{\mathcal O}$ is an orthogonal projection,
\[
    \delta(\mathcal O)
    =k-1-\operatorname{Tr}(E_{k-1}Q_{\mathcal O}).
\]
Both $E_{k-1}$ and $Q_{\mathcal O}$ have rank $k-1$.  Expanding their
Hilbert--Schmidt distance therefore gives
\[
    \|E_{k-1}-Q_{\mathcal O}\|_{F,\pi}^2
    =2(k-1)-2\operatorname{Tr}(E_{k-1}Q_{\mathcal O})
    =2\delta(\mathcal O).
\]
Finally, the singular values of the cross-Gram matrix formed from
orthonormal bases of $\mathcal U_{k-1}$ and $\mathcal V_{\mathcal O}$ are
$\cos\theta_1,\ldots,\cos\theta_{k-1}$.  Hence
\[
    \operatorname{Tr}(E_{k-1}Q_{\mathcal O})
    =\sum_{j=1}^{k-1}\cos^2\theta_j,
\]
which proves the last identity.
\end{proof}

The first equality in \eqref{eq:kmeans-identity} is the
$\pi$-weighted within-cluster sum of squares of the spectral points
$\Phi(x)$.  Equivalently, it is the expected squared quantization
error when $\Phi(x)$ is replaced by the centroid assigned to $x$.
Indeed,
\[
    \sum_{x\in\X}\pi(x)\Phi(x)=0,
    \qquad
    \sum_{x\in\X}\pi(x)\|\Phi(x)\|_2^2=k-1.
\]
Thus minimizing $\delta(\mathcal O)$ is exactly weighted $k$-means on
the raw embedding, with state weight $\pi(x)$.

The remaining equalities give the complementary geometric meaning.
The quantity $\delta(\mathcal O)$ is the squared chordal distance
between $\mathcal U_{k-1}$ and $\mathcal V_{\mathcal O}$, or one half of
the squared Hilbert--Schmidt distance between their projection
operators.  Here squared chordal distance means
$\sum_{j=1}^{k-1}\sin^2\theta_j$.  In particular,
\[
    \delta(\mathcal O)=0
    \quad\Longleftrightarrow\quad
    \mathcal U_{k-1}=\mathcal V_{\mathcal O}.
\]
This explains all three words in the name: the subspace
$\mathcal U_{k-1}$ is \emph{spectral}, a partition \emph{rounds} it to a
feasible block-constant subspace, and $\delta$ measures the resulting
\emph{error}.  The value is unchanged by an orthogonal change of
basis within a fixed $\mathcal U_{k-1}$.  If
$\eta_{k-1}<\eta_{k}$, that bottom subspace is canonical; if the cutoff
eigenvalue is tied, $\delta$ refers to the particular optimal
$(k-1)$-dimensional subspace selected for the embedding.

\subsection{The multiway spectral sweep}

The ideal rounding step globally minimizes $\delta(\mathcal O)$.  In
practice, weighted $k$-means is usually run from several
initializations.  The resulting procedure is given in
Algorithm~\ref{alg:multiway-sweep}.

\begin{algorithm}[H]
\caption{Multiway spectral sweep for $F$}
\label{alg:multiway-sweep}
\begin{enumerate}
    \item Compute $\phi_j$ for
    $j\in\llbracket k-1\rrbracket$ and form $\Phi(x)$.
    \item \emph{Spectral rounding.} Approximately minimize the spectral rounding error
    $\delta(\mathcal O)$ by running weighted $k$-means on the embedded
    points $\Phi(x)$, with weights $\pi(x)$, from several
    initializations.  In view of Proposition~\ref{prop:kmeans-identity},
    the weighted $k$-means distortion is exactly $\delta(\mathcal O)$.
    Retain the terminal partitions having $k$ nonempty blocks and denote
    them by
    $\mathcal O^{(1)},\ldots,\mathcal O^{(\ell)}$.
    \item \emph{Candidate rescoring.} Evaluate the original objective $F$ on every candidate and
    return
    \[
        \widehat{\mathcal O}_F\in
        \operatorname*{argmin}_{s\in\llbracket\ell\rrbracket}
        F(\mathcal O^{(s)}).
    \]
\end{enumerate}
\end{algorithm}

We minimize $\delta$ to generate candidates close to the relaxed
bottom spectral subspace, but we use $F$ to choose the final candidate.  In
general,
\[
    \operatorname*{argmin}_{\mathcal O}\delta(\mathcal O)
    \neq
    \operatorname*{argmin}_{\mathcal O}F(\mathcal O).
\]
The next result quantifies how close the two objectives are.

\subsection{Guarantee for the multiway sweep}

Write
\begin{equation}
    g:=\eta_{k}-\eta_{k-1},
    \qquad
    W:=\eta_{n-1}-\eta_1.
    \label{eq:g-W}
\end{equation}
Here $g$ is the eigengap after the bottom $k-1$ eigenvalues and $W$ is
the width of the nonconstant spectrum; the relaxed optimum
$L_{k-1}$ was defined in \eqref{eq:Ky-Fan}.

For a partition $\mathcal O$, define its projection mass on the
$j$th eigenfunction by
\begin{equation}
    a_j(\mathcal O)
    :=\|Q_{\mathcal O}\phi_j\|_\pi^2,
    \qquad j\in\llbracket n-1\rrbracket.
    \label{eq:spectral-projection-mass}
\end{equation}
Because $Q_{\mathcal O}$ is an orthogonal projection of rank $k-1$,
\begin{equation*}
    0\leq a_j(\mathcal O)\leq1,
    \qquad
    \sum_{j=1}^{n-1}a_j(\mathcal O)=k-1.
\end{equation*}
Moreover, \eqref{eq:delta-k} gives the mass-balance identity
\begin{equation}
    \sum_{j=k}^{n-1}a_j(\mathcal O)
    =\sum_{j=1}^{k-1}\bigl(1-a_j(\mathcal O)\bigr)
    =\delta(\mathcal O).
    \label{eq:mass-balance}
\end{equation}

We first relate $F(\mathcal O)$ to the rounding error and these
spectral quantities.
\begin{theorem}[Spectral sandwich]
\label{thm:k-sandwich}
Let $k\in\llbracket 2,n-1\rrbracket$, and recall that $g, W$ are introduced in \eqref{eq:g-W} and $L_{k-1}$ is defined in \eqref{eq:Ky-Fan}. For every $k$-block partition $\mathcal O$,
\begin{equation}
    L_{k-1}+g\,\delta(\mathcal O)
    \leq F(\mathcal O)
    \leq L_{k-1}+W\,\delta(\mathcal O).
    \label{eq:k-sandwich}
\end{equation}
\end{theorem}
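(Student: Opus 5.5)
The plan is to expand $F(\mathcal O)$ in the eigenbasis $\phi_1,\ldots,\phi_{n-1}$ of $P^2$ on $\ell^2_0(\pi)$, using the projection masses $a_j(\mathcal O)$ from \eqref{eq:spectral-projection-mass}. Since $Q_{\mathcal O}$ is an orthogonal projection onto a subspace of $\ell^2_0(\pi)$, computing the trace in \eqref{eq:F-trace} in this orthonormal basis (completed by the constant $\1$, which $Q_{\mathcal O}$ annihilates) gives
\[
F(\mathcal O)=\operatorname{Tr}(Q_{\mathcal O}P^2)=\sum_{j=1}^{n-1}\eta_j\langle \phi_j,Q_{\mathcal O}\phi_j\rangle_\pi=\sum_{j=1}^{n-1}\eta_j\,a_j(\mathcal O),
\]
because $\langle\phi_j,Q_{\mathcal O}\phi_j\rangle_\pi=\|Q_{\mathcal O}\phi_j\|_\pi^2$ for a self-adjoint idempotent. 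Next I subtract $L_{k-1}=\sum_{j\le k-1}\eta_j$ and split the indices at $k-1$:
\[
F(\mathcal O)-L_{k-1}=\sum_{j=k}^{n-1}\eta_j a_j-\sum_{j=1}^{k-1}\eta_j(1-a_j),
\]
with $a_j=a_j(\mathcal O)$. Both $a_j\ge0$ and $1-a_j\ge0$, and by the mass-balance identity \eqref{eq:mass-balance} the two families of weights each sum to $\delta(\mathcal O)$.

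For the lower bound, I will use $\eta_j\ge\eta_k$ for $j\ge k$ and $\eta_j\le\eta_{k-1}$ for $j\le k-1$. This gives $F-L_{k-1}\ge\eta_k\delta-\eta_{k-1}\delta=g\,\delta(\mathcal O)$. For the upper bound, I will use $\eta_j\le\eta_{n-1}$ in the first sum and $\eta_j\ge\eta_1$ in the second, which gives $F-L_{k-1}\le(\eta_{n-1}-\eta_1)\delta=W\delta(\mathcal O)$.

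There is no real obstacle. The only point that needs care is the trace expansion: I must check that the $\phi_j$ together with $\1$ form an orthonormal basis of $\ell^2(\pi)$, and that the $\1$-term vanishes since $Q_{\mathcal O}\1=0$. Once this is in place, the mass-balance identity reduces both inequalities to comparing weighted averages of eigenvalues. This is exactly the $k$-block analogue of the two-block computation in the proof of Theorem~\ref{thm:two-block-guarantee}.
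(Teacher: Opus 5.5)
Your proposal is correct and follows essentially the same route as the paper: expand $F(\mathcal O)=\sum_{j}\eta_j a_j(\mathcal O)$ in the eigenbasis, subtract $L_{k-1}$, split the sum at $k-1$, and bound each part using the mass-balance identity together with the monotonicity of the $\eta_j$. Your explicit checks are welcome additions to the paper's terser argument, namely that the $\1$-term vanishes in the trace and that $0\le a_j\le 1$.
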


\begin{proof}
The spectral representation of \eqref{eq:F-trace} is
\[
    F(\mathcal O)
    =\sum_{j=1}^{n-1}\eta_ja_j(\mathcal O).
\]
Hence
\[
    F(\mathcal O)-L_{k-1}
    =\sum_{j=k}^{n-1}\eta_ja_j(\mathcal O)
     -\sum_{j=1}^{k-1}\eta_j
       \bigl(1-a_j(\mathcal O)\bigr).
\]
Both sums in \eqref{eq:mass-balance} have total mass
$\delta(\mathcal O)$.  For the lower bound, use
$\eta_j\geq\eta_{k}$ in the first sum and
$\eta_j\leq\eta_{k-1}$ in the second.  For the upper bound, use
$\eta_j\leq\eta_{n-1}$ in the first sum and
$\eta_j\geq\eta_1$ in the second.  This gives
\eqref{eq:k-sandwich}.
\end{proof}

We note that $\delta(\mathcal O)$ is a geometric surrogate and should not
be confused with an actual suboptimality gap for $F$.  The
spectral sandwich in Theorem~\ref{thm:k-sandwich} shows why the
surrogate is useful: with $g$ and $W$ as in \eqref{eq:g-W},
\[
g\,\delta(\mathcal O)
\leq F(\mathcal O)-L_{k-1}
\leq W\,\delta(\mathcal O).
\]
Thus $\delta$ measures unweighted projection mass leaking away from
the relaxed bottom spectral subspace, while
$F-L_{k-1}$ is the eigenvalue-weighted cost of moving that mass from
bottom modes to higher modes.

Let
\[
    F_k^*:=\min_{|\mathcal O|=k}F(\mathcal O),
    \qquad
    \delta_k^*:=\min_{|\mathcal O|=k}\delta(\mathcal O),
\]
and define
\[
    \overline\delta_\ell
    :=\min_{s\in\llbracket\ell\rrbracket}
      \delta(\mathcal O^{(s)}).
\]

Applying Theorem~\ref{thm:k-sandwich} to the best generated candidate
gives the following algorithmic guarantee.

\begin{corollary}[Algorithmic guarantee]
\label{cor:k-algorithm}
Let $k\in\llbracket 2,n-1\rrbracket$.
The output of Algorithm~\ref{alg:multiway-sweep} satisfies
\begin{equation}
    0\leq F(\widehat{\mathcal O}_F)-F_k^*
    \leq W\overline\delta_\ell-g\delta_k^*.
    \label{eq:k-data-bound}
\end{equation}
If at least one candidate is a $\rho$-approximation for weighted
$k$-means, so that
$\overline\delta_\ell\leq\rho\delta_k^*$, then
\begin{equation*}
    F(\widehat{\mathcal O}_F)-F_k^*
    \leq(\rho W-g)\delta_k^*.
\end{equation*}
In particular, every exact minimizer $\mathcal O$ of $\delta$
satisfies
\begin{equation*}
    F(\mathcal O)-F_k^*
    \leq(W-g)\delta_k^*.
\end{equation*}
\end{corollary}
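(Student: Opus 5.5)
The plan is to apply the spectral sandwich of Theorem~\ref{thm:k-sandwich} twice: its upper half at the best generated candidate, and its lower half at an optimal partition for $F$. Throughout, the eigenvalue ordering gives $W\geq g\geq 0$, and every candidate $\mathcal O^{(s)}$ is a genuine $k$-block partition, since Algorithm~\ref{alg:multiway-sweep} retains only terminal partitions with $k$ nonempty blocks.

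\textbf{First inequality.} Let $s^\star\in\llbracket\ell\rrbracket$ attain $\overline\delta_\ell=\delta(\mathcal O^{(s^\star)})$. The output $\widehat{\mathcal O}_F$ minimizes $F$ over the candidates, so $F(\widehat{\mathcal O}_F)\leq F(\mathcal O^{(s^\star)})$. The upper half of \eqref{eq:k-sandwich} then gives
\[
F(\widehat{\mathcal O}_F)\leq L_{k-1}+W\overline\delta_\ell .
\]
Now let $\mathcal O^*$ attain $F_k^*$. The lower half of \eqref{eq:k-sandwich} gives
\[
F_k^*\geq L_{k-1}+g\,\delta(\mathcal O^*)\geq L_{k-1}+g\,\delta_k^* ,
\]
using $g\geq0$ and $\delta(\mathcal O^*)\geq\delta_k^*$. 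Subtracting the two displays cancels $L_{k-1}$ and yields the upper bound in \eqref{eq:k-data-bound}. Nonnegativity of $F(\widehat{\mathcal O}_F)-F_k^*$ holds simply because $\widehat{\mathcal O}_F$ is a feasible $k$-block partition.

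\textbf{The $\rho$-approximation bound.} If $\overline\delta_\ell\leq\rho\,\delta_k^*$, then, since $W\geq0$,
\[
W\overline\delta_\ell-g\,\delta_k^*\leq(\rho W-g)\,\delta_k^* .
\]

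\textbf{The exact-minimizer bound.} For an exact minimizer $\mathcal O$ of $\delta$, we repeat the first argument with $\mathcal O$ in place of the candidate. The upper sandwich gives $F(\mathcal O)\leq L_{k-1}+W\delta_k^*$, and subtracting the lower bound on $F_k^*$ gives $F(\mathcal O)-F_k^*\leq (W-g)\delta_k^*$. Equivalently, this is the case $\ell=1$, $\rho=1$.

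There is no real obstacle here. The only subtle point is that the lower sandwich must be applied at the $F$-optimal partition rather than at the candidate, and then weakened via $\delta(\mathcal O^*)\geq\delta_k^*$. This weakening needs $g\geq0$, which the eigenvalue ordering guarantees.
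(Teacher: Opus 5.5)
Your proof is correct and follows the paper's argument. The paper likewise minimizes the lower half of the spectral sandwich (Theorem~\ref{thm:k-sandwich}) over all $k$-block partitions to get $F_k^*\geq L_{k-1}+g\delta_k^*$, applies the upper half at a candidate attaining $\overline\delta_\ell$, and subtracts; your direct treatment of the exact-minimizer case, applying the upper sandwich at $\mathcal O$ itself, is a slightly cleaner version of the paper's ``substitute $\rho=1$'' step.
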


\begin{proof}
Theorem \ref{thm:k-sandwich}, minimized over all partitions, gives
\[
    F_k^*\geq L_{k-1}+g\delta_k^*.
\]
Choose a generated candidate attaining $\overline\delta_\ell$.
Since the algorithm chooses the candidate with the smallest value of
$F$,
\[
    F(\widehat{\mathcal O}_F)
    \leq L_{k-1}+W\overline\delta_\ell.
\]
Subtracting the two inequalities proves \eqref{eq:k-data-bound}.
The other two claims follow by substituting
$\overline\delta_\ell\leq\rho\delta_k^*$ and $\rho=1$,
respectively.
\end{proof}

There is also the computable, although sometimes weaker, certificate
\begin{equation*}
    F(\widehat{\mathcal O}_F)-F_k^*
    \leq F(\widehat{\mathcal O}_F)-L_{k-1}.
\end{equation*}
If $\delta_k^*=0$, then the bottom spectral subspace is exactly
block-constant for some partition, and both the ideal spectral
rounding and the original problem attain $L_{k-1}$.

\subsection{Contrast with classical multiway spectral clustering}

Equation \eqref{eq:F-cut} shows that our problem is the reverse
normalized-cut problem for $P^2$:
we maximize cross-block flow.  Classical spectral clustering instead
minimizes normalized cut, using the largest nonconstant eigenvalues of
$P^2$, or equivalently the smallest nonzero eigenvalues of $I-P^2$
\citep{ShiMalik2000,NgJordanWeiss2001}.  Those eigenfunctions describe
slowly varying, nearly invariant regions.  Our bottom eigenfunctions
describe directions that $P$ suppresses most strongly.

There are also two differences in the rounding step.  First, the
identity \eqref{eq:kmeans-identity} uses the raw embedding $\Phi$ and
the weights $\pi(x)$; row normalization, often used in classical
spectral clustering, does not in general preserve this exact identity.  Second,
classical higher-order Cheeger guarantees are proved using geometric
random partitions and local scalar sweeps, not by an analysis of
ordinary $k$-means \citep{LeeOveisGharanTrevisan2014}.  They give
square-root-type conductance bounds.  Theorem \ref{thm:k-sandwich} is
different: it gives a linear transfer from the weighted $k$-means
error $\delta$ to the Frobenius objective $F$, governed by the
eigengap $g$ and spectral width $W$.

\section{Additive mixtures and a \texorpdfstring{$P$}{P}-spectral
multiway sweep}
\label{sec:additive-mixture}

We now apply the same relaxation-and-rounding strategy to the additive
mixture
\[
A_{\mathcal O}:=\frac{P+G_{\mathcal O}}{2}.
\]
Fix $k\in\llbracket 2,n-1\rrbracket$.  Since both $P$
and $G_{\mathcal O}$ are $\pi$-reversible Markov kernels,
$A_{\mathcal O}$ is also a $\pi$-reversible Markov kernel.

One motivation and key difference for studying this additive mixture is that its
partition-dependent spectral term involves $P$ itself, rather than
$P^2$.  Consequently, the relaxation and spectral embedding below
use the bottom eigenfunctions of $P$, rather than the bottom
eigenfunctions of $P^2$ used for the objective $F$ in
Section~\ref{sec:k-block}.  This distinction preserves the signs of
the eigenvalues of $P$ and can be important when $P$ has negative
eigenvalues.

\subsection{The mixture objective and its cut interpretation}

Consider the squared weighted Frobenius distance of the additive
mixture kernel from stationarity:
\begin{equation}
	H(\mathcal O)
	:=\|A_{\mathcal O}-\Pi\|_{F,\pi}^2.
	\label{eq:H-definition}
\end{equation}
The next proposition derives a trace representation of this objective
and isolates its partition-dependent part.

\begin{proposition}[Additive-mixture identity]
	\label{prop:additive-mixture-identity}
	Let $k\in\llbracket 2,n-1\rrbracket$.
	For every $k$-block partition $\mathcal O$,
	\begin{equation}
		H(\mathcal O)
		=\operatorname{Tr}\left(
		\frac{P^2}{4}
		+\frac{G_{\mathcal O}P}{2}
		+\frac{G_{\mathcal O}}{4}
		-\Pi
		\right).
		\label{eq:H-trace}
	\end{equation}
	Moreover,
	\begin{equation}
		H(\mathcal O)
		=C_{P,k}
		+\frac12\operatorname{Tr}(Q_{\mathcal O}P),
		\label{eq:H-reduction}
	\end{equation}
	where
	\begin{equation}
		C_{P,k}
		:=\frac14\operatorname{Tr}(P^2)+\frac{k-2}{4}.
		\label{eq:H-constant}
	\end{equation}
	Consequently, for fixed $P$ and $k$,
	\begin{equation*}
		\operatorname*{argmin}_{|\mathcal O|=k}H(\mathcal O)
		=
		\operatorname*{argmin}_{|\mathcal O|=k}
		\operatorname{Tr}(Q_{\mathcal O}P).
	\end{equation*}
\end{proposition}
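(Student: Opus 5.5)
Since $A_{\mathcal O}$ is $\pi$-reversible, $A_{\mathcal O}-\Pi$ is self-adjoint on $\ell^2(\pi)$. The plan is to start from
\[
H(\mathcal O)=\operatorname{Tr}\bigl((A_{\mathcal O}-\Pi)^*(A_{\mathcal O}-\Pi)\bigr)=\operatorname{Tr}\bigl((A_{\mathcal O}-\Pi)^2\bigr)
\]
and expand the square algebraically. We write $A_{\mathcal O}-\Pi=\tfrac12(P-\Pi)+\tfrac12(G_{\mathcal O}-\Pi)$, or expand directly. The relations $P\Pi=\Pi P=\Pi$, $G_{\mathcal O}\Pi=\Pi G_{\mathcal O}=\Pi$, $\Pi^2=\Pi$ and $G_{\mathcal O}^2=G_{\mathcal O}$ then give
\[
(A_{\mathcal O}-\Pi)^2=\frac{P^2}{4}+\frac{PG_{\mathcal O}+G_{\mathcal O}P}{4}+\frac{G_{\mathcal O}}{4}-\Pi.
\]
The $\Pi$ terms collect as follows: $A_{\mathcal O}\Pi=\Pi A_{\mathcal O}=\Pi$, so $-A\Pi-\Pi A+\Pi^2=-\Pi$. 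Cyclicity of the trace gives $\operatorname{Tr}(PG_{\mathcal O})=\operatorname{Tr}(G_{\mathcal O}P)$, which yields \eqref{eq:H-trace}.

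For \eqref{eq:H-reduction}, substitute $G_{\mathcal O}=Q_{\mathcal O}+\Pi$ into \eqref{eq:H-trace}. We need three traces.
\begin{itemize}
\item $\operatorname{Tr}(G_{\mathcal O})=k$, since $G_{\mathcal O}$ is an orthogonal projection of rank $k$.
\item $\operatorname{Tr}(\Pi)=1$.
\item $\operatorname{Tr}(\Pi P)=\operatorname{Tr}(\Pi)=1$.
\end{itemize}
Hence $\operatorname{Tr}(G_{\mathcal O}P)=\operatorname{Tr}(Q_{\mathcal O}P)+1$. Altogether,
\[
H(\mathcal O)=\tfrac14\operatorname{Tr}(P^2)+\tfrac12\operatorname{Tr}(Q_{\mathcal O}P)+\tfrac12+\tfrac k4-1=\tfrac14\operatorname{Tr}(P^2)+\tfrac{k-2}{4}+\tfrac12\operatorname{Tr}(Q_{\mathcal O}P),
\]
which matches \eqref{eq:H-constant}. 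The argmin statement is then immediate, because $C_{P,k}$ depends only on $P$ and $k$ and the coefficient $\tfrac12$ is positive.

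The only delicate step is the bookkeeping of the $\Pi$ cross terms and the constant. Everything else is routine operator algebra that the paper has already used in deriving \eqref{eq:F-trace}.
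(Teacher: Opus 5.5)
Your proposal is correct and follows essentially the same route as the paper. Both arguments reduce $H(\mathcal O)$ to $\operatorname{Tr}(A_{\mathcal O}^2-\Pi)$ via self-adjointness and $A_{\mathcal O}\Pi=\Pi A_{\mathcal O}=\Pi$, expand $A_{\mathcal O}^2$ using $G_{\mathcal O}^2=G_{\mathcal O}$ and cyclicity of the trace, and then substitute $G_{\mathcal O}=\Pi+Q_{\mathcal O}$ together with $\operatorname{Tr}(\Pi P)=1$ and $\operatorname{Tr}(G_{\mathcal O})=k$; your constant bookkeeping is right.
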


\begin{proof}
	Since $P$ and $G_{\mathcal O}$ are self-adjoint,
	$A_{\mathcal O}$ is self-adjoint.  Moreover,
	\[
	A_{\mathcal O}\Pi
	=\Pi A_{\mathcal O}
	=\Pi.
	\]
	It follows that
	\[
	\begin{aligned}
		H(\mathcal O)
		&=\operatorname{Tr}\left(
		(A_{\mathcal O}-\Pi)^*
		(A_{\mathcal O}-\Pi)
		\right) \\
		&=\operatorname{Tr}\left(
		(A_{\mathcal O}-\Pi)^2
		\right) \\
		&=\operatorname{Tr}(A_{\mathcal O}^2-\Pi).
	\end{aligned}
	\]
	Because
	\[
	A_{\mathcal O}
	=\frac{P+G_{\mathcal O}}{2}
	\]
	and $G_{\mathcal O}^2=G_{\mathcal O}$, we have
	\[
	A_{\mathcal O}^2
	=\frac14\left(
	P^2+PG_{\mathcal O}
	+G_{\mathcal O}P+G_{\mathcal O}
	\right).
	\]
	Cyclicity of the trace gives
	\[
	\operatorname{Tr}(PG_{\mathcal O})
	=\operatorname{Tr}(G_{\mathcal O}P).
	\]
	Substituting into the preceding expression proves
	\eqref{eq:H-trace}.  Notice that this is an equality of traces;
	the corresponding operators need not be equal because $P$ and
	$G_{\mathcal O}$ need not commute.
	
	It remains to isolate the partition-dependent term.  Since
	\[
	G_{\mathcal O}=\Pi+Q_{\mathcal O},
	\]
	we have
	\[
	\operatorname{Tr}(G_{\mathcal O}P)
	=\operatorname{Tr}(\Pi P)
	+\operatorname{Tr}(Q_{\mathcal O}P)
	=1+\operatorname{Tr}(Q_{\mathcal O}P).
	\]
	Furthermore, $G_{\mathcal O}$ is a rank-$k$ orthogonal projection, so
	\[
	\operatorname{Tr}(G_{\mathcal O})=k.
	\]
	Therefore, \eqref{eq:H-trace} becomes
	\[
	\begin{aligned}
		H(\mathcal O)
		&=\frac14\operatorname{Tr}(P^2)
		+\frac12\left(
		1+\operatorname{Tr}(Q_{\mathcal O}P)
		\right)
		+\frac{k}{4}-1 \\
		&=\frac14\operatorname{Tr}(P^2)
		+\frac{k-2}{4}
		+\frac12\operatorname{Tr}(Q_{\mathcal O}P),
	\end{aligned}
	\]
	which proves \eqref{eq:H-reduction}.  The first two terms depend only
	on $P$ and $k$, so the equality of the two sets of minimizers follows.
\end{proof}

The general trace--cut identity \eqref{eq:general-trace-cut}, applied
to $B=P$, gives
\begin{equation}
    \operatorname{Tr}(Q_{\mathcal O}P)
    =k-1-\operatorname{Ncut}_P(\mathcal O).
    \label{eq:QP-cut}
\end{equation}
Thus, for fixed $k$, minimizing $H$ is equivalent to maximizing the
multiway normalized cut of the one-step chain $P$.

\subsection{The bottom spectral subspace of
	\texorpdfstring{$P$}{P}}

Let
\[
-1<\lambda_1\leq\lambda_2\leq\cdots
\leq\lambda_{n-1}<1
\]
be the eigenvalues of $P$ on $\ell^2_0(\pi)$, listed in
nondecreasing order and counting multiplicity (recall that $P$ is assumed to be ergodic and reversible).  Choose corresponding
$\pi$-orthonormal eigenfunctions
$\psi_1,\ldots,\psi_{n-1}$.  Define
\begin{equation*}
	E_{k-1}^P
	:=\sum_{j=1}^{k-1}\psi_j\otimes_\pi\psi_j.
\end{equation*}
Then $E_{k-1}^P$ is the orthogonal projection onto the selected
bottom $(k-1)$-dimensional spectral subspace
\[
\operatorname{span}\{\psi_1,\ldots,\psi_{k-1}\}
\subseteq\ell^2_0(\pi).
\]
The Ky Fan variational principle \citep{Fan1951} and
\eqref{eq:H-reduction} give
\begin{equation}
	L_H
	:=C_{P,k}+\frac12\sum_{j=1}^{k-1}\lambda_j
	\leq
	H_k^*
	:=\min_{|\mathcal O|=k}H(\mathcal O).
	\label{eq:H-relaxed-lower-bound}
\end{equation}
The corresponding raw spectral embedding is
\begin{equation}
	\Psi(x)
	=\bigl(\psi_1(x),\ldots,\psi_{k-1}(x)\bigr)
	\in\mathbb R^{k-1}.
	\label{eq:P-spectral-embedding}
\end{equation}
If $\lambda_{k-1}<\lambda_k$, the spectral subspace
$\operatorname{Ran}(E_{k-1}^P)$ and its orthogonal projector are
uniquely determined. If $\lambda_{k-1}=\lambda_k$, a repeated eigenspace straddles the
spectral cutoff.  The selected $(k-1)$-dimensional spectral subspace,
and hence $E_{k-1}^P$ and $\Psi$, need not be unique.  Throughout, we
fix one such choice.  Every choice has the same relaxed value $L_H$,
and the subsequent bounds apply to the selected projector and
embedding.

For a $k$-block partition, define the analogue of the spectral
rounding error by
\begin{equation}
    \delta_P(\mathcal O)
    :=\sum_{j=1}^{k-1}
      \|(I-G_{\mathcal O})\psi_j\|_\pi^2.
    \label{eq:delta-P}
\end{equation}
If
\[
    \nu_i
    =\frac1{\pi(O_i)}
      \sum_{x\in O_i}\pi(x)\Psi(x),
\]
then the same projection calculation as in
Proposition~\ref{prop:kmeans-identity} gives
\begin{equation}
\begin{split}
    \delta_P(\mathcal O)
    &=\sum_{i=1}^k\sum_{x\in O_i}
      \pi(x)\|\Psi(x)-\nu_i\|_2^2\\
    &=\frac12\|E_{k-1}^P-Q_{\mathcal O}\|_{F,\pi}^2.
\end{split}
    \label{eq:delta-P-identities}
\end{equation}
Indeed, $G_{\mathcal O}\psi_j$ is the blockwise centroid of the
$j$th coordinate, which proves the weighted $k$-means identity.
Since each $\psi_j$ is centered,
\[
    \delta_P(\mathcal O)
    =k-1-\operatorname{Tr}(E_{k-1}^P Q_{\mathcal O}).
\]
Both $E_{k-1}^P$ and $Q_{\mathcal O}$ are orthogonal projections of
rank $k-1$, and expanding their squared Frobenius distance proves
the second identity in \eqref{eq:delta-P-identities}.  Therefore
$\delta_P$ is simultaneously the $\pi$-weighted $k$-means
distortion of the embedding $\Psi$ and the squared chordal distance
between the relaxed bottom spectral subspace and the centered
block-constant subspace.

\subsection{The additive-mixture multiway spectral sweep}

The practical general-$k$ procedure is given in
Algorithm~\ref{alg:additive-mixture-sweep}.

\begin{algorithm}[H]
\caption{Additive-mixture multiway spectral sweep}
\label{alg:additive-mixture-sweep}
\begin{enumerate}
    \item Compute the eigenfunctions $\psi_j$ for
    $j\in\llbracket k-1\rrbracket$ from the bottom
    nonconstant spectral subspace of $P$, and form the embedding $\Psi$ in
    \eqref{eq:P-spectral-embedding}.
    \item \emph{Spectral rounding.} Approximately minimize the spectral rounding error
    $\delta_P(\mathcal O)$ by running weighted $k$-means on the embedded
    points $\Psi(x)$, with weights $\pi(x)$, from several
    initializations.  In view of \eqref{eq:delta-P-identities}, the
    weighted $k$-means distortion is exactly $\delta_P(\mathcal O)$.
    Retain the terminal partitions having $k$ nonempty blocks and denote
    them by
    $\mathcal O^{(1)},\ldots,\mathcal O^{(\ell)}$.
    \item \emph{Candidate rescoring.} Evaluate the original additive-mixture objective $H$ for
    every candidate and return
    \[
        \widehat{\mathcal O}_H
        \in\operatorname*{argmin}_{s\in\llbracket\ell\rrbracket}
          H(\mathcal O^{(s)}).
    \]
\end{enumerate}
\end{algorithm}

The weighted $k$-means stage generates partitions with small
$\delta_P$, while the final choice is made using $H$, because a
partition minimizing the geometric surrogate $\delta_P$ need not
minimize the original additive-mixture objective.  By
\eqref{eq:H-reduction} and \eqref{eq:QP-cut}, the final step could
equivalently select the candidate with the smallest
$\operatorname{Tr}(Q_{\mathcal O}P)$ or the largest
$\operatorname{Ncut}_P(\mathcal O)$.

\subsection{Guarantee for the additive-mixture sweep}

Define
\begin{equation}
    g_P:=\lambda_{k}-\lambda_{k-1},
    \qquad
    W_P:=\lambda_{n-1}-\lambda_1.
    \label{eq:gP-WP}
\end{equation}

We now state a spectral sandwich result analogous to Theorem \ref{thm:k-sandwich} but in the context of additive-mixture:

\begin{theorem}[Additive-mixture spectral sandwich]
\label{thm:H-sandwich}
Let $k\in\llbracket 2,n-1\rrbracket$, and recall that $g_P, W_P$ are introduced in \eqref{eq:gP-WP}, while $L_H$, $\delta_P$ are defined in \eqref{eq:H-relaxed-lower-bound} and \eqref{eq:delta-P} respectively. For every $k$-block partition $\mathcal O$,
\begin{equation}
    L_H+\frac{g_P}{2}\delta_P(\mathcal O)
    \leq H(\mathcal O)
    \leq
    L_H+\frac{W_P}{2}\delta_P(\mathcal O).
    \label{eq:H-sandwich}
\end{equation}
\end{theorem}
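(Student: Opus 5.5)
The plan is to reduce to the proof of Theorem~\ref{thm:k-sandwich}, with $P$ in place of $P^2$. By \eqref{eq:H-reduction},
\[
H(\mathcal O)-L_H=\tfrac12\Bigl(\operatorname{Tr}(Q_{\mathcal O}P)-\sum_{j=1}^{k-1}\lambda_j\Bigr).
\]
It therefore suffices to show
\[
g_P\,\delta_P(\mathcal O)\leq \operatorname{Tr}(Q_{\mathcal O}P)-\sum_{j=1}^{k-1}\lambda_j\leq W_P\,\delta_P(\mathcal O).
\]

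For this I will introduce the projection masses $b_j(\mathcal O):=\|Q_{\mathcal O}\psi_j\|_\pi^2$ for $j\in\llbracket n-1\rrbracket$, the analogue of \eqref{eq:spectral-projection-mass}. Since $Q_{\mathcal O}$ is a rank-$(k-1)$ orthogonal projection onto a subspace of $\ell^2_0(\pi)$, and $\psi_1,\ldots,\psi_{n-1}$ is an orthonormal basis of $\ell^2_0(\pi)$, we get $0\le b_j\le 1$ and $\sum_j b_j=k-1$. Expanding the trace in this eigenbasis (the constant direction contributes nothing, because $Q_{\mathcal O}\1=0$) gives
\[
\operatorname{Tr}(Q_{\mathcal O}P)=\sum_{j=1}^{n-1}\langle \psi_j,Q_{\mathcal O}P\psi_j\rangle_\pi=\sum_{j=1}^{n-1}\lambda_j b_j.
\]

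Next I need the mass-balance identity. Each $\psi_j$ is centered, so $(I-G_{\mathcal O})\psi_j=(I-Q_{\mathcal O})\psi_j$, and $\|(I-Q_{\mathcal O})\psi_j\|_\pi^2=1-b_j$. Then \eqref{eq:delta-P} yields $\delta_P(\mathcal O)=\sum_{j\le k-1}(1-b_j)$. Combining this with $\sum_j b_j=k-1$ gives $\sum_{j\ge k}b_j=\delta_P(\mathcal O)$. Hence
\[
\operatorname{Tr}(Q_{\mathcal O}P)-\sum_{j=1}^{k-1}\lambda_j=\sum_{j=k}^{n-1}\lambda_j b_j-\sum_{j=1}^{k-1}\lambda_j(1-b_j),
\]
where both sums carry nonnegative weights of total mass $\delta_P(\mathcal O)$.

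The bounds then follow from monotonicity of the ordered eigenvalues. For the lower bound, use $\lambda_j\ge\lambda_k$ in the first sum and $\lambda_j\le\lambda_{k-1}$ in the second, which gives at least $(\lambda_k-\lambda_{k-1})\delta_P=g_P\delta_P$. For the upper bound, use $\lambda_j\le\lambda_{n-1}$ and $\lambda_j\ge\lambda_1$, which gives at most $W_P\delta_P$. Multiplying by $\tfrac12$ yields \eqref{eq:H-sandwich}.

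None of these steps is hard. The one place needing care is that the $\lambda_j$ may be negative, unlike the $\eta_j$. This causes no trouble, because the argument only uses the ordering of the eigenvalues and the nonnegativity of the weights $b_j$ and $1-b_j$, never the signs of the $\lambda_j$. A tie $\lambda_{k-1}=\lambda_k$ is also harmless: it only forces $g_P=0$, and the bounds still hold for whichever eigenbasis was fixed.
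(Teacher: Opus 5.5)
Your proposal is correct and is essentially the paper's proof. It uses the same projection masses $b_j=\|Q_{\mathcal O}\psi_j\|_\pi^2$, the same mass-balance identity $\delta_P(\mathcal O)=\sum_{j\le k-1}(1-b_j)=\sum_{j\ge k}b_j$, the reduction via \eqref{eq:H-reduction} to $\frac12\bigl(\operatorname{Tr}(Q_{\mathcal O}P)-\sum_{j=1}^{k-1}\lambda_j\bigr)$, and the same eigenvalue-ordering bounds on the two sums.
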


\begin{proof}
For $j\in\llbracket n-1\rrbracket$, set
\[
    b_j=\|Q_{\mathcal O}\psi_j\|_\pi^2.
\]
The projection $Q_{\mathcal O}$ has rank $k-1$ on $\ell^2_0(\pi)$, so
$\sum_{j=1}^{n-1}b_j=k-1$.  It follows that
\begin{equation}
    \delta_P(\mathcal O)
    =\sum_{j=1}^{k-1}(1-b_j)
    =\sum_{j=k}^{n-1}b_j.
    \label{eq:H-mass-balance}
\end{equation}
Moreover,
\[
    \operatorname{Tr}(Q_{\mathcal O}P)
    =\sum_{j=1}^{n-1}\lambda_jb_j.
\]
Using \eqref{eq:H-reduction} and
\eqref{eq:H-relaxed-lower-bound}, we obtain
\[
\begin{split}
    H(\mathcal O)-L_H
    =\frac12\biggl(
       \sum_{j=k}^{n-1}\lambda_jb_j
       -\sum_{j=1}^{k-1}\lambda_j(1-b_j)
      \biggr).
\end{split}
\]
For the lower bound, use
$\lambda_j\geq\lambda_{k}$ in the first sum and
$\lambda_j\leq\lambda_{k-1}$ in the second.  For the upper bound, use
$\lambda_j\leq\lambda_{n-1}$ in the first sum and
$\lambda_j\geq\lambda_1$ in the second.  The mass-balance identity
\eqref{eq:H-mass-balance} then proves \eqref{eq:H-sandwich}.
\end{proof}

Let
\[
    \delta_{P,k}^*
    =\min_{|\mathcal O|=k}\delta_P(\mathcal O),
    \qquad
    \overline\delta_{P,\ell}
    =\min_{s\in\llbracket\ell\rrbracket}
      \delta_P(\mathcal O^{(s)}).
\]

Analogous to Corollary \ref{cor:k-algorithm}, we make use of the spectral sandwich result Theorem \ref{thm:H-sandwich} to give a guarantee of Algorithm~\ref{alg:additive-mixture-sweep}:

\begin{corollary}[Algorithmic guarantee for the mixture objective]
\label{cor:H-algorithm}
Let $k\in\llbracket 2,n-1\rrbracket$.
The output of Algorithm~\ref{alg:additive-mixture-sweep} satisfies
\begin{equation}
    0\leq H(\widehat{\mathcal O}_H)-H_k^*
    \leq\frac12\left(
       W_P\overline\delta_{P,\ell}
       -g_P\delta_{P,k}^*
      \right).
    \label{eq:H-data-bound}
\end{equation}
If at least one candidate is a $\rho$-approximation for weighted
$k$-means, so that
$\overline\delta_{P,\ell}\leq\rho\delta_{P,k}^*$, then
\begin{equation}
    H(\widehat{\mathcal O}_H)-H_k^*
    \leq\frac12(\rho W_P-g_P)\delta_{P,k}^*.
    \label{eq:H-rho-bound}
\end{equation}
\end{corollary}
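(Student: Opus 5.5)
The plan mirrors the proof of Corollary~\ref{cor:k-algorithm}, with Theorem~\ref{thm:H-sandwich} in place of Theorem~\ref{thm:k-sandwich}. The left inequality $0\leq H(\widehat{\mathcal O}_H)-H_k^*$ is immediate. Each retained candidate is a genuine $k$-block partition, so $H_k^*$, the minimum over all $k$-block partitions, cannot exceed $H(\widehat{\mathcal O}_H)$.

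For the upper bound, I will first apply the lower half of \eqref{eq:H-sandwich} to an arbitrary $k$-block partition $\mathcal O$. This gives $H(\mathcal O)\geq L_H+\tfrac{g_P}{2}\delta_P(\mathcal O)\geq L_H+\tfrac{g_P}{2}\delta_{P,k}^*$. Here I use $g_P\geq0$, which holds because the eigenvalues are listed in nondecreasing order. Minimizing over $\mathcal O$ yields
\[
H_k^*\geq L_H+\frac{g_P}{2}\delta_{P,k}^*.
\]
Next, I pick a candidate $\mathcal O^{(s_0)}$ attaining $\overline\delta_{P,\ell}$; it exists because $\ell\geq1$ candidates are retained. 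The algorithm returns the candidate minimizing $H$, so by the upper half of \eqref{eq:H-sandwich},
\[
H(\widehat{\mathcal O}_H)\leq H(\mathcal O^{(s_0)})\leq L_H+\frac{W_P}{2}\overline\delta_{P,\ell}.
\]
Subtracting the two displays cancels $L_H$ and gives \eqref{eq:H-data-bound}.

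For \eqref{eq:H-rho-bound}, I will substitute $\overline\delta_{P,\ell}\leq\rho\,\delta_{P,k}^*$ into \eqref{eq:H-data-bound}. This is valid because the coefficient $W_P/2$ is nonnegative, since $\lambda_{n-1}\geq\lambda_1$. No step is a genuine obstacle. The only care needed is to check the signs of $g_P$ and $W_P$ so the inequalities can be multiplied through. The fixed choice of embedding when $\lambda_{k-1}=\lambda_k$ is harmless, because $L_H$ does not depend on that choice and the sandwich applies to the selected projector, as noted after \eqref{eq:P-spectral-embedding}.
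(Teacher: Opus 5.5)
Your proposal is correct and follows the paper's proof essentially step for step. You minimize the lower half of Theorem~\ref{thm:H-sandwich} to get $H_k^*\geq L_H+\tfrac{g_P}{2}\delta_{P,k}^*$, bound $H(\widehat{\mathcal O}_H)$ through a candidate attaining $\overline\delta_{P,\ell}$ and the upper half, subtract, and then substitute the $\rho$-approximation; your explicit checks that $g_P,W_P\geq0$ and $\ell\geq1$ are details the paper leaves implicit.
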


\begin{proof}
Minimizing the lower bound in Theorem~\ref{thm:H-sandwich} over all
$k$-block partitions gives
\[
    H_k^*\geq L_H+\frac{g_P}{2}\delta_{P,k}^*.
\]
Choose a generated candidate attaining
$\overline\delta_{P,\ell}$.  Since
$\widehat{\mathcal O}_H$ has the smallest $H$ among all generated
candidates, the upper bound in Theorem~\ref{thm:H-sandwich} gives
\[
    H(\widehat{\mathcal O}_H)
    \leq L_H+\frac{W_P}{2}\overline\delta_{P,\ell}.
\]
Subtracting proves \eqref{eq:H-data-bound}; substituting the
$\rho$-approximation assumption proves \eqref{eq:H-rho-bound}.
\end{proof}

The computable lower bound $L_H$ also gives the certificate
\[
    H(\widehat{\mathcal O}_H)-H_k^*
    \leq H(\widehat{\mathcal O}_H)-L_H.
\]
If $\delta_{P,k}^*=0$, the selected bottom spectral subspace of $P$ is
exactly a centered block-constant subspace, and both the ideal
rounding and the original objective attain $L_H$.

\subsection{Relation to the earlier objective and to classical
spectral clustering}

The earlier objective $F$ depends on $P^2$, so its bottom embedding
uses the eigenvalues of $P$ having the smallest absolute values.  In
contrast, $H$ uses the algebraically smallest eigenvalues of $P$.
Thus the additive-mixture relaxation is sensitive to negative modes:
it seeks a block-constant subspace aligned with directions on which
$P$ has the most negative action.  If $P$ is positive semidefinite,
the bottom spectral subspaces of $P$ and $P^2$ agree, up to choices inside
tied eigenspaces; otherwise the two embeddings may be very different.

Equation \eqref{eq:QP-cut} also shows that this remains a reverse
normalized-cut problem.  Ordinary normalized spectral clustering
seeks small cross-block flow and uses slowly varying eigenfunctions.
Here one seeks large one-step cross-block flow and uses the bottom
eigenfunctions of $P$, in the same broad smallest-eigenvalue
orientation that appears in spectral approaches to maximum cut
\citep{Trevisan2012}.  The continuous relaxation is rounded to a
partition by weighted $k$-means, following the general
relaxation--discretization principle of multiclass spectral
clustering \citep{YuShi2003}.

\section{Multi-horizon distance to stationarity}
\label{sec:multi-horizon}

The objective $F(\mathcal O)$ in \eqref{eq:F-definition} measures the
distance to stationarity after one averaging step $G_\mathcal{O}$ followed by one
application of $P$.  A partition that performs well at one step need
not be the best partition over a longer time scale.  This motivates
aggregating the distance to stationarity by considering powers of Markov kernels.  Using powers of
a Markov kernel to describe the effects of several time scales is also a
standard idea in multiscale diffusion geometry
\citep{CoifmanLafon2006}.  In the present context, the aggregation
retains the exact trace structure needed for spectral relaxation.

Throughout this section, fix
$k\in\llbracket 2,n-1\rrbracket$.

\subsection{Lagged and aggregated objectives}

For a positive integer $t\geq1$, define the lag-$t$ objective
\begin{equation}
    F(P^t,\mathcal O)
    :=\|G_{\mathcal O}P^t-\Pi\|_{F,\pi}^2.
    \label{eq:F-lag-t}
\end{equation}
With the stationary Markov chain $(X_t)_{t \in \mathbb{N}\cup\{0\}}$, initial block label $Z$,
block masses $\pi(O_i)$, and conditional laws $\pi_i$ introduced in the
discussion of \eqref{eq:F-chi-square-information} for $i \in \llbracket k \rrbracket$, the same
calculation with $P^t$ in place of $P$ gives
\begin{equation*}
    F(P^t,\mathcal O)
    =\sum_{i=1}^k \pi(O_i)\,\chi^2(\pi_iP^t\|\pi)
    =I_{\chi^2}(Z;X_t),
    \qquad t\geq1.
    \label{eq:F-lag-chi-square-information}
\end{equation*}
Indeed, $\mathcal L(X_t\mid Z=i)=\pi_iP^t$ and
$\mathcal L(X_t)=\pi$.  Thus the lag-$t$ objective measures the
Pearson chi-square information about the initial block label that
remains after $t$ transitions.  The finite- and discounted
multi-horizon objectives defined below therefore aggregate this
residual block information over several lags.  

Thus $F(P,\mathcal O)=F(\mathcal O)$.  Given a finite horizon
$T\geq1$ and weights $w_1,\ldots,w_T\geq0$ that are not all zero,
define
\begin{equation}
    F_w^T(\mathcal O)
    :=\sum_{t=1}^T w_t F(P^t,\mathcal O).
    \label{eq:F-wT}
\end{equation}
For a discount factor $0\leq\gamma<1$, define the discounted
infinite-horizon objective
\begin{equation}
    F_\gamma^\infty(\mathcal O)
    :=\sum_{t=1}^\infty
      \gamma^{t-1}F(P^t,\mathcal O).
    \label{eq:F-gamma-infinity}
\end{equation}
Multiplying all the weights by the same positive constant does not
change the minimizing partitions.  Consequently, the weights in
\eqref{eq:F-wT} may be normalized to sum to one when a time-average
interpretation is desired.  Similarly, $(1-\gamma)F_\gamma^\infty$
and $F_\gamma^\infty$ have the same minimizers.

The next proposition gives trace formulae for
$F(P^t,\mathcal O)$, $F_w^T$, and $F_\gamma^\infty$.

\begin{proposition}[Trace and distance to stationarity identities]
\label{prop:multi-horizon-identities}
Let $k\in\llbracket 2,n-1\rrbracket$.
For every $t\geq1$ and every $k$-block partition $\mathcal O$,
\begin{equation}
    F(P^t,\mathcal O)
    =\operatorname{Tr}(Q_{\mathcal O}P^{2t}).
    \label{eq:F-lag-trace}
\end{equation}
Consequently,
\begin{align}
    F_w^T(\mathcal O)
    &=\operatorname{Tr}(Q_{\mathcal O}B_w^T),
    &
    B_w^T&:=\sum_{t=1}^T w_tP^{2t},
    \label{eq:F-wT-trace}\\
    F_\gamma^\infty(\mathcal O)
    &=\operatorname{Tr}(Q_{\mathcal O}B_\gamma^\infty),
    &
    B_\gamma^\infty
    &:=P^2(I-\gamma P^2)^{-1}.
    \label{eq:F-gamma-trace}
\end{align}

\end{proposition}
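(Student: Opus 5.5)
The plan is to repeat the argument that produced \eqref{eq:F-trace}, with $P^t$ in place of $P$. The first step is to note that $P^t$ is again $\pi$-self-adjoint and satisfies $P^t\Pi=\Pi P^t=\Pi$. Since $G_{\mathcal O}\Pi=\Pi$, we then get
\[
    G_{\mathcal O}P^t-\Pi=(G_{\mathcal O}-\Pi)P^t=Q_{\mathcal O}P^t .
\]
Taking adjoints, and using that $Q_{\mathcal O}$ is an orthogonal projection so that $Q_{\mathcal O}^*Q_{\mathcal O}=Q_{\mathcal O}$, gives
\[
    F(P^t,\mathcal O)=\operatorname{Tr}\bigl(P^tQ_{\mathcal O}Q_{\mathcal O}P^t\bigr)=\operatorname{Tr}\bigl(P^tQ_{\mathcal O}P^t\bigr).
\]
Cyclicity of the trace then turns the right-hand side into $\operatorname{Tr}(Q_{\mathcal O}P^{2t})$, which is \eqref{eq:F-lag-trace}.

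The finite-horizon identity \eqref{eq:F-wT-trace} then follows at once from linearity of the trace. Summing \eqref{eq:F-lag-trace} against the weights $w_t$ gives $\operatorname{Tr}(Q_{\mathcal O}\sum_t w_tP^{2t})$.

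For the discounted identity \eqref{eq:F-gamma-trace}, the only point that needs care is convergence and the interchange of the infinite sum with the trace. I would work in the $\pi$-orthonormal eigenbasis $\{\1,\phi_1,\ldots,\phi_{n-1}\}$. In this basis $P^2$ has eigenvalue $1$ on $\1$ and eigenvalues $\eta_j\in[0,1)$ on the $\phi_j$. Since $\gamma<1$, the operator $I-\gamma P^2$ is invertible, because its eigenvalues $1-\gamma\eta$ are at least $1-\gamma>0$. The Neumann series $\sum_{t\geq1}\gamma^{t-1}P^{2t}=P^2(I-\gamma P^2)^{-1}$ converges absolutely in operator norm, since $\|P^2\|\leq1$.

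Because the space is finite-dimensional, the trace is a continuous linear functional. Hence
\[
    \sum_{t\ge1}\gamma^{t-1}\operatorname{Tr}(Q_{\mathcal O}P^{2t})=\operatorname{Tr}\bigl(Q_{\mathcal O}B_\gamma^\infty\bigr).
\]
For extra transparency, one can also write each term spectrally. Let $a_j(\mathcal O)$ be the projection masses of \eqref{eq:spectral-projection-mass}, and recall that $Q_{\mathcal O}\1=0$. Then $F(P^t,\mathcal O)=\sum_j\eta_j^t a_j(\mathcal O)$, and the series becomes the finite sum $\sum_j a_j(\mathcal O)\,\eta_j/(1-\gamma\eta_j)$. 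This is manifestly finite and equals the trace against $B_\gamma^\infty$.

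I expect no real obstacle here. The mildly delicate step is justifying the infinite-sum interchange, and the spectral representation handles it cleanly. One also checks that $B_\gamma^\infty$ is well defined even though $P^2$ has eigenvalue $1$ on the constants, since the factor $(1-\gamma)^{-1}$ is finite for $\gamma<1$.
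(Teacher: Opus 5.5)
Your proposal is correct and follows essentially the same route as the paper. You factor $G_{\mathcal O}P^t-\Pi=Q_{\mathcal O}P^t$, use self-adjointness, $Q_{\mathcal O}^2=Q_{\mathcal O}$ and cyclicity of the trace, sum linearly for the finite horizon, and invoke the Neumann series for the discounted case; your explicit justification for exchanging the infinite sum with the trace (continuity of the trace in finite dimensions, or the spectral formula $\sum_j a_j(\mathcal O)\,\eta_j/(1-\gamma\eta_j)$) is a small addition that the paper leaves implicit.
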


\begin{proof}
Since $P^t\Pi=\Pi P^t=\Pi$,
\[
    G_{\mathcal O}P^t-\Pi=Q_{\mathcal O}P^t.
\]
The operators $P$ and $Q_{\mathcal O}$ are self-adjoint, and
$Q_{\mathcal O}^2=Q_{\mathcal O}$.  Hence
\[
\begin{split}
    F(P^t,\mathcal O)
    &=\operatorname{Tr}\left(
       (Q_{\mathcal O}P^t)^*Q_{\mathcal O}P^t
      \right)\\
    &=\operatorname{Tr}(P^tQ_{\mathcal O}P^t)
     =\operatorname{Tr}(Q_{\mathcal O}P^{2t}),
\end{split}
\]
which proves \eqref{eq:F-lag-trace}.  Summing this identity proves
\eqref{eq:F-wT-trace}.  Since $0\leq\gamma<1$ and
$\|P^2\|_{2\to2,\pi}\leq1$, the Neumann series gives
\[
    \sum_{t=1}^\infty\gamma^{t-1}P^{2t}
    =P^2\sum_{s=0}^\infty(\gamma P^2)^s
    =P^2(I-\gamma P^2)^{-1},
\]
which proves \eqref{eq:F-gamma-trace}.
\end{proof}

It is important to distinguish the objective in
\eqref{eq:F-lag-t} from the $t$-step convergence of a repeatedly
averaged chain.  In general,
\begin{equation*}
    G_{\mathcal O}P^t
    \neq (G_{\mathcal O}P)^t.
\end{equation*}
The left-hand side averages the initial state within its block and
then evolves the original chain $P$ for $t$ steps. An objective based on $(G_{\mathcal O}P)^t$ would instead measure the actual $t$-step convergence of the repeatedly averaged
kernel and would be nonlinear in the partition projection. The present formulation is attractive precisely because it retains the linear trace representation in Proposition~\ref{prop:multi-horizon-identities}.

The lagged distance is nonincreasing in $t$:
\begin{equation*}
    F(P^{t+1},\mathcal O)
    \leq F(P^t,\mathcal O),
    \qquad t\geq1.
\end{equation*}
This follows from the ideal property of the Hilbert--Schmidt norm
$\|\cdot\|_{F,\pi}$. Indeed, since $\Pi P=\Pi$ and
$\|P\|_{2\to2,\pi}\leq1$,
\[
\begin{aligned}
	F(P^{t+1},\mathcal O)^{1/2}
	&=\left\|
	\bigl(G_{\mathcal O}P^t-\Pi\bigr)P
	\right\|_{F,\pi}\\
	&\leq
	F(P^t,\mathcal O)^{1/2}\|P\|_{2\to2,\pi}\\
	&\leq F(P^t,\mathcal O)^{1/2}.
\end{aligned}
\] Thus the objectives in \eqref{eq:F-wT} and \eqref{eq:F-gamma-infinity} aggregate the distance to stationarity as it decays over time.

There is also a multi-horizon cut interpretation.  For either
$B=B_w^T$ or $B=B_\gamma^\infty$, let $c$ satisfy $B\1=c\1$.
Equation \eqref{eq:general-trace-cut} gives
\begin{equation*}
    \operatorname{Tr}(Q_{\mathcal O}B)
    =c(k-1)-\operatorname{Ncut}_B(\mathcal O).
\end{equation*}
If $\sum_{t=1}^T w_t=1$, then $B_w^T$ is a reversible Markov
kernel obtained by mixing the even-step kernels $P^{2t}$.  Likewise,
$(1-\gamma)B_\gamma^\infty$ is the reversible Markov kernel
\[
    (1-\gamma)\sum_{t=1}^\infty
       \gamma^{t-1}P^{2t}.
\]
Hence, after a positive rescaling, both objectives are
reverse normalized-cut objectives for mixtures of several time
horizons.

\subsection{Spectral filters and persistent modes}

For the simultaneous eigenbasis fixed in
Section~\ref{sec:preliminaries}, write
\[
P\phi_j=\xi_j\phi_j,
\qquad
\eta_j=\xi_j^2,
\]
where the $\eta_j$ remain ordered as in Section~\ref{sec:preliminaries}.
Thus
the signed eigenvalues $\xi_j$ are indexed according to the
nondecreasing order of their squares; they are not, in general,
ordered in the same way as the signed eigenvalues
$\lambda_1,\ldots,\lambda_{n-1}$ in
Section~\ref{sec:additive-mixture}.
The eigenvalues of $B_w^T$ and $B_\gamma^\infty$ on
$\ell^2_0(\pi)$ are, respectively,
\begin{equation}
	\beta_{w,j}^T
	:=\sum_{t=1}^T w_t\eta_j^t,
	\qquad
	\beta_{\gamma,j}^\infty
	:=\frac{\eta_j}{1-\gamma\eta_j}.
	\label{eq:filtered-eigenvalues}
\end{equation}

Under the standing assumptions $w_t\geq0$, with the weights not all
zero, and $0\leq\gamma<1$, both coefficient maps
$u\mapsto\sum_{t=1}^T w_tu^t$ and
$u\mapsto u/(1-\gamma u)$ are strictly increasing on $[0,1]$.
They therefore preserve the ordering, multiplicities, and eigenspaces
of $P^2$.

If $\eta_{k-1}<\eta_k$, all the multi-horizon objectives have the
same uniquely determined bottom $(k-1)$-dimensional spectral
subspace as the one-step objective $F$.  Their raw embeddings agree
up to an orthogonal change of basis.  If
$\eta_{k-1}=\eta_k$, a tied eigenspace straddles the spectral cutoff,
so the selected $(k-1)$-dimensional subspace need not be unique.  In
that case, we fix the same choice of subspace and eigenbasis for all
the objectives.  What changes across the objectives are the
eigenvalue penalties, the filtered eigengap and spectral width, and
the ranking of genuine partitions by their exact objective values.

Recall the projection masses $a_j(\mathcal O)$ from
\eqref{eq:spectral-projection-mass}.
Equations \eqref{eq:F-wT-trace}--\eqref{eq:filtered-eigenvalues}
give
\begin{align}
    F_w^T(\mathcal O)
    &=\sum_{j=1}^{n-1}
      \beta_{w,j}^T a_j(\mathcal O),
    \label{eq:F-w-mode-expansion}\\
    F_\gamma^\infty(\mathcal O)
    &=\sum_{j=1}^{n-1}
      \frac{\eta_j}{1-\gamma\eta_j}
      a_j(\mathcal O).
    \label{eq:F-gamma-mode-expansion}
\end{align}

\subsection{Why modes with \texorpdfstring{$|\xi|\approx1$}
{|xi| approximately 1} receive a large penalty}

Here a \emph{mode} means an eigenfunction $\phi_j$ of the Markov
operator $P$, satisfying $P\phi_j=\xi_j\phi_j$.  The eigenfunction
describes a pattern across the state space, while the eigenvalue
determines its temporal evolution: after $t$ steps, this pattern is
multiplied by $\xi_j^t$.  

The term \emph{spectral leakage} refers to overlap of the feasible
centered block-constant subspace
$\operatorname{Ran}(Q_{\mathcal O})$ with modes outside the relaxed
bottom spectral subspace.  The mass-balance identity
\eqref{eq:mass-balance} says exactly that
\[
    \delta(\mathcal O)
    =\sum_{j=k}^{n-1}a_j(\mathcal O).
\]
Thus $a_j(\mathcal O)$ for $j\geq k$ is precisely the projection mass
that has leaked from the relaxed bottom subspace into higher modes
during rounding.

Consider a single eigenmode satisfying $P\phi=\xi\phi$.  If its
overlap with $\operatorname{Ran}(Q_{\mathcal O})$ is $a$, then its
contribution at lag $t$ is
\begin{equation}
    a|\xi|^{2t}.
    \label{eq:single-mode-lag-cost}
\end{equation}
A mode with small $|\xi|$ is rapidly suppressed by $P^t$, so
\eqref{eq:single-mode-lag-cost} quickly becomes negligible.  A mode
with $|\xi|$ close to one persists for many steps.  Its total
discounted contribution is
\begin{equation}
    a\sum_{t=1}^\infty
       \gamma^{t-1}|\xi|^{2t}
    =a\frac{|\xi|^2}{1-\gamma|\xi|^2}
    =a\frac{|\xi|^2}
      {(1-\gamma)+\gamma(1-|\xi|^2)}.
    \label{eq:single-mode-discounted-cost}
\end{equation}
Relative to the one-step coefficient $|\xi|^2$, discounting
multiplies the weight by
\begin{equation}
    \frac1{1-\gamma|\xi|^2},
    \label{eq:multi-horizon-amplification}
\end{equation}
which is nondecreasing in $|\xi|$, and strictly increasing when $\gamma > 0$.  When both $1-\gamma$ and
$1-|\xi|^2$ are small, the denominator in
\eqref{eq:single-mode-discounted-cost} is small and the penalty is
large.  By contrast, for a small $|\xi|$, the multiplier in
\eqref{eq:multi-horizon-amplification} is close to one.  This is the
precise meaning of the statement that the discounted objective
strongly penalizes leakage into modes with $|\xi|\approx1$.
Negative eigenvalues close to $-1$ are penalized as well, because
they produce persistent period-two oscillations and their squared
magnitudes also decay slowly.

\subsection{The multi-horizon spectral sweep and its guarantee}

For the remainder of this subsection, let $F_\star$ denote either
$F_w^T$ or $F_\gamma^\infty$, and let
\[
    \beta_j^\star
    =\begin{cases}
       \beta_{w,j}^T,&F_\star=F_w^T,\\
       \beta_{\gamma,j}^\infty,
          &F_\star=F_\gamma^\infty.
     \end{cases}
\]
These eigenvalues are ordered increasingly.  Define
\begin{equation}
    L_\star:=\sum_{j=1}^{k-1}\beta_j^\star,
    \qquad
    g_\star:=\beta_{k}^\star-\beta_{k-1}^\star,
    \qquad
    W_\star:=\beta_{n-1}^\star-\beta_1^\star.
    \label{eq:multi-L-g-W}
\end{equation}

The multi-horizon spectral sweep is given in
Algorithm~\ref{alg:multi-horizon-sweep}.

\begin{algorithm}[H]
\caption{Multi-horizon spectral sweep}
\label{alg:multi-horizon-sweep}
\begin{enumerate}
    \item Compute the eigenfunctions $\phi_j$ for
    $j\in\llbracket k-1\rrbracket$ from the bottom
    spectral subspace of $P^2$, and form the raw embedding
    $\Phi$ in \eqref{eq:spectral-embedding}.
    \item \emph{Spectral rounding.} Approximately minimize the spectral rounding error
    $\delta(\mathcal O)$ by running weighted $k$-means on the embedded
    points $\Phi(x)$, with weights $\pi(x)$, from several
    initializations.  In view of Proposition~\ref{prop:kmeans-identity},
    the weighted $k$-means distortion is exactly $\delta(\mathcal O)$.
    Retain the terminal partitions having $k$ nonempty blocks and denote
    them by
    $\mathcal O^{(1)},\ldots,\mathcal O^{(\ell)}$.
    \item \emph{Candidate rescoring.} Evaluate the chosen original objective $F_\star$ on every
    candidate and return
    \[
        \widehat{\mathcal O}_\star
        \in\operatorname*{argmin}_{s\in\llbracket\ell\rrbracket}
          F_\star(\mathcal O^{(s)}).
    \]
\end{enumerate}
\end{algorithm}

For the discounted objective, we write
$\widehat{\mathcal O}_\gamma:=\widehat{\mathcal O}_\star$ when
$F_\star=F_\gamma^\infty$, matching the notation used in the
figures.

For $k=2$, the first two steps may instead use the exact scalar
threshold sweep of Section~\ref{sec:two-block}.  Although the
spectral embedding is the same as for the one-step objective, the
last step is essential: different time weights can select different
partitions from the candidate family.

Analogous to Theorem~\ref{thm:k-sandwich}, we give a multi-horizon version of spectral sandwich:

\begin{theorem}[Multi-horizon spectral sandwich]
\label{thm:multi-horizon-sandwich}
Let $k\in\llbracket 2,n-1\rrbracket$, and recall that $L_\star, g_\star, W_\star$ are introduced in \eqref{eq:multi-L-g-W}. For every $k$-block partition $\mathcal O$,
\begin{equation}
    L_\star+g_\star\delta(\mathcal O)
    \leq F_\star(\mathcal O)
    \leq L_\star+W_\star\delta(\mathcal O).
    \label{eq:multi-horizon-sandwich}
\end{equation}
\end{theorem}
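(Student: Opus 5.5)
The plan is to repeat the argument of Theorem~\ref{thm:k-sandwich} verbatim, with the eigenvalues $\eta_j$ replaced by the filtered eigenvalues $\beta_j^\star$. The key input is the mode expansion \eqref{eq:F-w-mode-expansion} or \eqref{eq:F-gamma-mode-expansion}. Together these give, for either choice of $F_\star$,
\[
    F_\star(\mathcal O)=\sum_{j=1}^{n-1}\beta_j^\star a_j(\mathcal O).
\]
Here the projection masses $a_j(\mathcal O)=\|Q_{\mathcal O}\phi_j\|_\pi^2$ are computed in the same simultaneous eigenbasis $\phi_1,\ldots,\phi_{n-1}$ of $P$ and $P^2$ fixed in Section~\ref{sec:preliminaries}. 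This is legitimate because $B_w^T$ and $B_\gamma^\infty$ are functions of $P^2$, so this basis diagonalizes them too, with eigenvalues \eqref{eq:filtered-eigenvalues}. The trace identities of Proposition~\ref{prop:multi-horizon-identities} then expand $\operatorname{Tr}(Q_{\mathcal O}B)$ in that basis.

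The one point needing care is the ordering. The maps $u\mapsto\sum_t w_tu^t$ and $u\mapsto u/(1-\gamma u)$ are strictly increasing on $[0,1]$. Hence $\beta_1^\star\le\cdots\le\beta_{n-1}^\star$ holds in the same index order as the $\eta_j$. It follows that $L_\star=\sum_{j\le k-1}\beta_j^\star$ is the sum of the $k-1$ smallest filtered eigenvalues. It also follows that the bottom subspace used to define $\delta(\mathcal O)$ is the same $\mathcal U_{k-1}$, which matters when there are ties, since we fixed the same choice for all objectives.

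With this in hand, subtract $L_\star$ to get
\[
    F_\star(\mathcal O)-L_\star=\sum_{j=k}^{n-1}\beta_j^\star a_j(\mathcal O)-\sum_{j=1}^{k-1}\beta_j^\star\bigl(1-a_j(\mathcal O)\bigr).
\]
The mass-balance identity \eqref{eq:mass-balance} says that both sums carry nonnegative weights of total mass $\delta(\mathcal O)$; nonnegativity holds because $0\le a_j\le 1$.

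For the lower bound, bound $\beta_j^\star\ge\beta_k^\star$ in the first sum and $\beta_j^\star\le\beta_{k-1}^\star$ in the second, which gives $g_\star\delta(\mathcal O)$. For the upper bound, use $\beta_j^\star\le\beta_{n-1}^\star$ and $\beta_j^\star\ge\beta_1^\star$, which gives $W_\star\delta(\mathcal O)$. I expect no real obstacle beyond verifying the monotonicity and the common-eigenbasis claim, both of which the paper has already recorded.
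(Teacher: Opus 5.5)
Your proposal is correct and follows essentially the same route as the paper's proof. Both substitute the filtered eigenvalues $\beta_j^\star$ into the mode expansion and subtract $L_\star$. Both then use the mass-balance identity \eqref{eq:mass-balance} and bound the two sums with the monotone ordering of the $\beta_j^\star$, exactly as in Theorem~\ref{thm:k-sandwich}.
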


\begin{proof}
By \eqref{eq:mass-balance},
\[
    \sum_{j=1}^{k-1}(1-a_j(\mathcal O))
    =\sum_{j=k}^{n-1}a_j(\mathcal O)
    =\delta(\mathcal O).
\]
Using the appropriate expansion in
\eqref{eq:F-w-mode-expansion} or
\eqref{eq:F-gamma-mode-expansion},
\[
\begin{split}
    F_\star(\mathcal O)-L_\star
    &={}
      \sum_{j=k}^{n-1}\beta_j^\star a_j(\mathcal O)
      -\sum_{j=1}^{k-1}
       \beta_j^\star(1-a_j(\mathcal O)).
\end{split}
\]
For the lower bound, use
$\beta_j^\star\geq\beta_{k}^\star$ in the first
sum and $\beta_j^\star\leq\beta_{k-1}^\star$ in the second.
For the upper bound, use
$\beta_j^\star\leq\beta_{n-1}^\star$ in the first sum and
$\beta_j^\star\geq\beta_1^\star$ in the second.  This gives
\eqref{eq:multi-horizon-sandwich}.
\end{proof}

Let
\[
    F_{\star,k}^*
    :=\min_{|\mathcal O|=k}F_\star(\mathcal O).
\]
We retain $\delta_k^*$ from Section~\ref{sec:k-block}.

Analogous to Corollary~\ref{cor:k-algorithm}, we state the guarantee of Algorithm~\ref{alg:multi-horizon-sweep}:

\begin{corollary}[Algorithmic guarantee for the multi-horizon sweep]
\label{cor:multi-horizon-algorithm}
Let $k\in\llbracket 2,n-1\rrbracket$.
The output of Algorithm~\ref{alg:multi-horizon-sweep} satisfies
\begin{equation}
    0\leq
    F_\star(\widehat{\mathcal O}_\star)-F_{\star,k}^*
    \leq
    W_\star
    \min_{s\in\llbracket\ell\rrbracket}
      \delta(\mathcal O^{(s)})
    -g_\star\delta_k^*.
    \label{eq:multi-horizon-algorithm-bound}
\end{equation}
If at least one candidate is a $\rho$-approximation for weighted
$k$-means, so that
\[
    \min_{s\in\llbracket\ell\rrbracket}
      \delta(\mathcal O^{(s)})
    \leq\rho\delta_k^*,
\]
then
\begin{equation}
    F_\star(\widehat{\mathcal O}_\star)-F_{\star,k}^*
    \leq(\rho W_\star-g_\star)\delta_k^*.
    \label{eq:multi-horizon-rho-bound}
\end{equation}
\end{corollary}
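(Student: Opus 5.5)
The proof will mirror that of Corollary~\ref{cor:k-algorithm}, now with Theorem~\ref{thm:multi-horizon-sandwich} in place of Theorem~\ref{thm:k-sandwich}. Note that the rounding stage of Algorithm~\ref{alg:multi-horizon-sweep} is identical to that of Algorithm~\ref{alg:multiway-sweep}. It uses the same embedding $\Phi$ and the same error $\delta$, so $\delta_k^*$ and the candidates $\mathcal O^{(s)}$ are the same objects that enter the multi-horizon sandwich. Only the eigenvalue quantities change, from $(L_{k-1},g,W)$ to $(L_\star,g_\star,W_\star)$ in \eqref{eq:multi-L-g-W}.

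\textbf{Lower bound on $F_{\star,k}^*$.} Apply the left inequality in \eqref{eq:multi-horizon-sandwich} to an arbitrary $k$-block partition and use $\delta(\mathcal O)\geq\delta_k^*$. Since $g_\star\geq0$, because the filter maps preserve the ordering of the $\eta_j$, this gives $F_\star(\mathcal O)\geq L_\star+g_\star\delta_k^*$. Minimizing over $\mathcal O$ yields $F_{\star,k}^*\geq L_\star+g_\star\delta_k^*$.

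\textbf{Upper bound on the output.} Pick a candidate $\mathcal O^{(s_0)}$ attaining $\min_s\delta(\mathcal O^{(s)})$. By the right inequality in \eqref{eq:multi-horizon-sandwich}, $F_\star(\mathcal O^{(s_0)})\leq L_\star+W_\star\delta(\mathcal O^{(s_0)})$. Because $\widehat{\mathcal O}_\star$ minimizes $F_\star$ over the candidates, $F_\star(\widehat{\mathcal O}_\star)$ is bounded by the same quantity.

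\textbf{Conclusion.} Subtracting the lower bound from the upper bound gives the right side of \eqref{eq:multi-horizon-algorithm-bound}. The left inequality $0\leq F_\star(\widehat{\mathcal O}_\star)-F_{\star,k}^*$ is immediate, since $\widehat{\mathcal O}_\star$ is a feasible $k$-block partition; the algorithm retains only candidates with $k$ nonempty blocks. For \eqref{eq:multi-horizon-rho-bound}, substitute $\min_s\delta(\mathcal O^{(s)})\leq\rho\delta_k^*$, using $W_\star\geq0$.

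\textbf{Main obstacle.} There is no real obstacle. The only point needing care is the sign facts $g_\star\geq0$ and $W_\star\geq0$, which are needed to pass from $\delta(\mathcal O)$ to $\delta_k^*$ and from $\min_s\delta$ to $\rho\delta_k^*$. Both follow from the monotonicity of $u\mapsto\sum_t w_tu^t$ and $u\mapsto u/(1-\gamma u)$ on $[0,1]$, noted earlier in the section.
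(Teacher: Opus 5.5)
Your proposal is correct and follows the paper's proof. Like the paper, you minimize the lower half of Theorem~\ref{thm:multi-horizon-sandwich} over all $k$-block partitions, apply its upper half to a candidate attaining $\min_s\delta(\mathcal O^{(s)})$, subtract, and then substitute the $\rho$-approximation; your explicit checks that $g_\star\geq0$ and $W_\star\geq0$, which follow from the monotone filter maps, spell out two steps the paper uses tacitly.
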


\begin{proof}
Minimizing the lower bound in
Theorem~\ref{thm:multi-horizon-sandwich} over all $k$-block
partitions gives
\[
    F_{\star,k}^*\geq L_\star+g_\star\delta_k^*.
\]
Choose a generated candidate attaining
$\min_{s\in\llbracket\ell\rrbracket}
\delta(\mathcal O^{(s)})$.  Since
$\widehat{\mathcal O}_\star$ has the smallest exact
$F_\star$ among all generated candidates, the upper bound in
Theorem~\ref{thm:multi-horizon-sandwich} gives
\[
    F_\star(\widehat{\mathcal O}_\star)
    \leq
    L_\star+
    W_\star\min_{s\in\llbracket\ell\rrbracket}
      \delta(\mathcal O^{(s)}).
\]
Subtracting proves \eqref{eq:multi-horizon-algorithm-bound}, and the
$\rho$-approximation assumption proves
\eqref{eq:multi-horizon-rho-bound}.
\end{proof}

\subsection{Choosing the horizon weights}

Several choices have simple interpretations.

\begin{itemize}
    \item Setting $T=1$ and $w_1=1$ recovers the original objective
    $F(\mathcal O)$.

    \item Uniform weights $w_t=1$ give the spectral penalty
    \[
        u+\cdots+u^T
        =\frac{u(1-u^T)}{1-u},
    \]
    whose value at $u=1$ is understood to be $T$.  This treats the
    first $T$ lags equally.

    \item For the infinite-horizon objective $F_\gamma^\infty$ with $\gamma \in [0,1)$, the weights
    $w_t=\gamma^{t-1}$ decay geometrically.  After multiplication by
    $1-\gamma$, they become the probability mass function
    \[
    \mathbb P(\tau=t)
    =(1-\gamma)\gamma^{t-1},
    \qquad t=1,2,\ldots,
    \]
    of a geometric random horizon $\tau$. It follows that
    \begin{equation*}
    	(1-\gamma)F_\gamma^\infty(\mathcal O)
    	=\mathbb E\bigl[F(P^\tau,\mathcal O)\bigr].
    \end{equation*}
    Its mean, or effective horizon, is $(1-\gamma)^{-1}$.
\end{itemize}

Positive multi-horizon weights do not create a new relaxed spectral
subspace: they apply an increasing filter to the same eigenvalues of
$P^2$.  Their value is instead to alter how severely different modes
are penalized and to choose the final genuine partition according to
performance over the desired range of time scales.  

\section{Numerical experiments}
\label{sec:numerical-experiments}

All three multiway procedures use a spectral embedding followed by
weighted $k$-means, paralleling the standard spectral-clustering
rounding paradigm \citep{NgJordanWeiss2001}.  In the present setting,
however, weighted $k$-means is used only to generate partitions with
small spectral rounding error; the resulting candidates are ranked
using the corresponding original Markov-chain objective.
Table~\ref{tab:spectral-sweep-comparison} summarizes the three
procedures.

\begin{table}[H]
	\centering
	\small
	\caption{Comparison of the three multiway spectral sweep procedures.
		The second column gives the spectral rounding surrogate approximately
		minimized by weighted $k$-means, the third gives the original
		candidate-selection objective, and the rightmost column records the
		notation for the selected output.}
	\label{tab:spectral-sweep-comparison}
	
	\begingroup
	\renewcommand{\arraystretch}{1.25}
	\begin{tabularx}{\textwidth}{
			@{}
			>{\raggedright\arraybackslash}p{0.23\textwidth}
			>{\raggedright\arraybackslash}p{0.22\textwidth}
			>{\raggedright\arraybackslash}X
			>{\centering\arraybackslash}p{0.13\textwidth}
			@{}
		}
		\toprule
		Spectral sweep and embedding
		&
		Weighted $k$-means surrogate
		&
		Exact candidate-selection objective
		&
		Selected output
		\\
		\midrule
		\addlinespace
		
		One-step Frobenius sweep,
		Algorithm~\ref{alg:multiway-sweep}.
		
		Bottom spectral subspace of $P^2$ and embedding $\Phi$.
		&
		$\delta(\mathcal O)$, whose weighted $k$-means representation is
		given in Proposition~\ref{prop:kmeans-identity}.
		&
		$
		F(\mathcal O)
		=
		\|G_{\mathcal O}P-\Pi\|_{F,\pi}^2
		$
		(see \eqref{eq:F-definition}).
		&
		$\widehat{\mathcal O}_F$ 
		\\
		\addlinespace
		
		Additive-mixture sweep,
		Algorithm~\ref{alg:additive-mixture-sweep}.
		
		Bottom spectral subspace of $P$ and embedding $\Psi$.
		&
		$\delta_P(\mathcal O)$, whose weighted $k$-means representation is
		given in \eqref{eq:delta-P-identities}.
		&
		$
		H(\mathcal O)
		=
		\|A_{\mathcal O}-\Pi\|_{F,\pi}^2
		$
		(see \eqref{eq:H-definition}).
		&
		$\widehat{\mathcal O}_H$
		\\
		\addlinespace
		
		Multi-horizon sweep,
		Algorithm~\ref{alg:multi-horizon-sweep}.
		
		Bottom spectral subspace of $P^2$ and the same embedding $\Phi$ as in the
		one-step sweep.
		&
		The same rounding surrogate $\delta(\mathcal O)$ as in
		Proposition~\ref{prop:kmeans-identity}.
		&
		$
		F_\star(\mathcal O)
		\in
		\bigl\{
		F_w^T(\mathcal O),
		F_\gamma^\infty(\mathcal O)
		\bigr\}
		$
		(see \eqref{eq:F-wT} and
		\eqref{eq:F-gamma-infinity}).
		&
		$\widehat{\mathcal O}_\star$

		($\widehat{\mathcal O}_\gamma$ when
		$F_\star=F_\gamma^\infty$)
		\\
		\bottomrule
	\end{tabularx}
	\endgroup
\end{table}

As Table~\ref{tab:spectral-sweep-comparison} shows, the one-step
Frobenius and multi-horizon objectives share the same embedding
$\Phi$ and the same rounding surrogate $\delta$, because their
relaxed operators are increasing spectral functions of $P^2$.
The additive-mixture objective instead uses the bottom spectral subspace of
$P$ and therefore has the distinct embedding $\Psi$ and rounding
surrogate $\delta_P$.  In every case, the final candidate is selected
using the original objective rather than the rounding surrogate.

\subsection{Implementation and evaluation protocol}
\label{subsec:numerical-protocol}

The Python code used to run all three numerical experiments and
generate the associated figures is available at
\url{https://github.com/mchchoi/spectral-partitioning}.  The
implementations construct the finite-state transition matrices and
stationary distributions explicitly.  Weighted $k$-means uses
weighted $k$-means++ initialization
\citep{ArthurVassilvitskii2007}, followed by weighted Lloyd
iterations.  Unless stated otherwise, each multiway experiment uses
$80$ starts.  Every distinct nonempty terminal partition is retained,
and the winner is the partition with the smallest value of the
relevant original objective among those candidates.  Thus
``optimized'' below always means optimized over the generated
candidate pool; it does not mean that the globally optimal discrete
partition has been found.  Algorithm~\ref{alg:two-block-sweep}
instead evaluates every nontrivial threshold cut of the bottom
nonconstant eigenfunction of $P^2$.

We use two blocks for Algorithm~\ref{alg:two-block-sweep} and $k=4$ blocks
for Algorithms~\ref{alg:multiway-sweep},
\ref{alg:additive-mixture-sweep}, and
\ref{alg:multi-horizon-sweep}.  The multi-horizon experiment uses
$\gamma=0.9$, corresponding to mean geometric horizon $10$.  To keep
the figures compact, their label
$F_\gamma$ means $F_\gamma^\infty$.  

In addition to the objective for which each procedure was designed, we
report
\begin{equation}
    d_{\mathrm{TV}}^{\mathrm{wc}}(K,t)
    :=\max_{x\in\X}
      \|K^t(x,\cdot)-\pi\|_{\mathrm{TV}}.
    \label{eq:worst-case-TV}
\end{equation}
This is a useful diagnostic for the repeatedly applied output kernel,
but it is not the same quantity as the multi-horizon objective.  In
particular,
\[
    (G_{\mathcal O}P)^t
    \ne G_{\mathcal O}P^t
\]
in general.  Consequently, a smaller $F_\gamma^\infty$ need not imply
a smaller repeated-kernel total-variation distance.

\subsection{Controlled-spectrum weighted-graph experiment}
\label{subsec:controlled-spectrum-experiment}

\paragraph{Construction.}
Let
\[
    \X=\{0,1\}\times\mathbb Z_{20}.
\]
Within each copy of $\mathbb Z_{20}$, consecutive vertices are joined
by edges of weight one.  A single bridge of weight $\varepsilon$ joins
$(0,0)$ to $(1,1)$.  Thus, before lazification, the construction is a
weighted dumbbell graph: two $20$-cycles are
connected by a single edge, which becomes a pronounced bottleneck when
$\varepsilon$ is small.  If $W_\varepsilon$ is the resulting symmetric
weight matrix and $D_\varepsilon$ its degree matrix, define
\begin{equation*}
    P_{\alpha,\varepsilon}
    =\alpha I+(1-\alpha)D_\varepsilon^{-1}W_\varepsilon,
    \qquad
    \pi(x)=
    \frac{D_\varepsilon(x,x)}
         {\operatorname{Tr}(D_\varepsilon)}.
\end{equation*}
The bridge preserves bipartiteness before lazification.  Hence the
unlazified random walk has eigenvalue $-1$.  Since lazification maps
each eigenvalue $\mu$ to
\[
\alpha+(1-\alpha)\mu,
\]
the eigenvalue ordering introduced in Section~\ref{sec:additive-mixture} gives
\begin{equation*}
	\lambda_1(P_{\alpha,\varepsilon})=2\alpha-1.
\end{equation*}
Meanwhile, a small bridge weight produces a largest non-unit
eigenvalue $\lambda_{n-1}(P_{\alpha,\varepsilon})$ close to one.
The two parameters therefore let us vary a large negative signed mode
and a slowly decaying positive mode in a controlled way.

\paragraph{A signed-spectrum instance.}
We first take $\alpha=0.08$ and $\varepsilon=0.03$.  Here
\[
\lambda_1(P_{0.08,0.03})=-0.84,
\qquad
\lambda_{n-1}(P_{0.08,0.03})=0.998748.
\]
Thus $\lambda_1$ is the smallest eigenvalue, while
$\lambda_{n-1}$ is the largest non-unit eigenvalue, equivalently the
second-largest eigenvalue of the full kernel after the unit
eigenvalue.

The spectral relaxation for the additive-mixture objective uses the
algebraically smallest eigenvalues of $P$.  In contrast, the
relaxation based on $P^2$ uses the smallest squared eigenvalues
$\lambda_j^2$, equivalently the eigenvalues of $P$ having the smallest
absolute values.  Consequently, the relaxed bottom spectral subspaces used
by $P$ and $P^2$ are markedly different in this example.
Figure~\ref{fig:controlled-partitions} displays all four outputs.
Algorithms~\ref{alg:multiway-sweep} and
\ref{alg:multi-horizon-sweep} select the same four-block partition in
this instance, whereas the additive-mixture output disagrees with it.
This is the intended signed-spectrum distinction: $F$ treats large
positive and negative eigenvalues symmetrically through $P^2$, while
$H$ retains their signs through $P$.

\begin{figure}[tbp]
    \centering
    \includegraphics[width=0.96\textwidth]
        {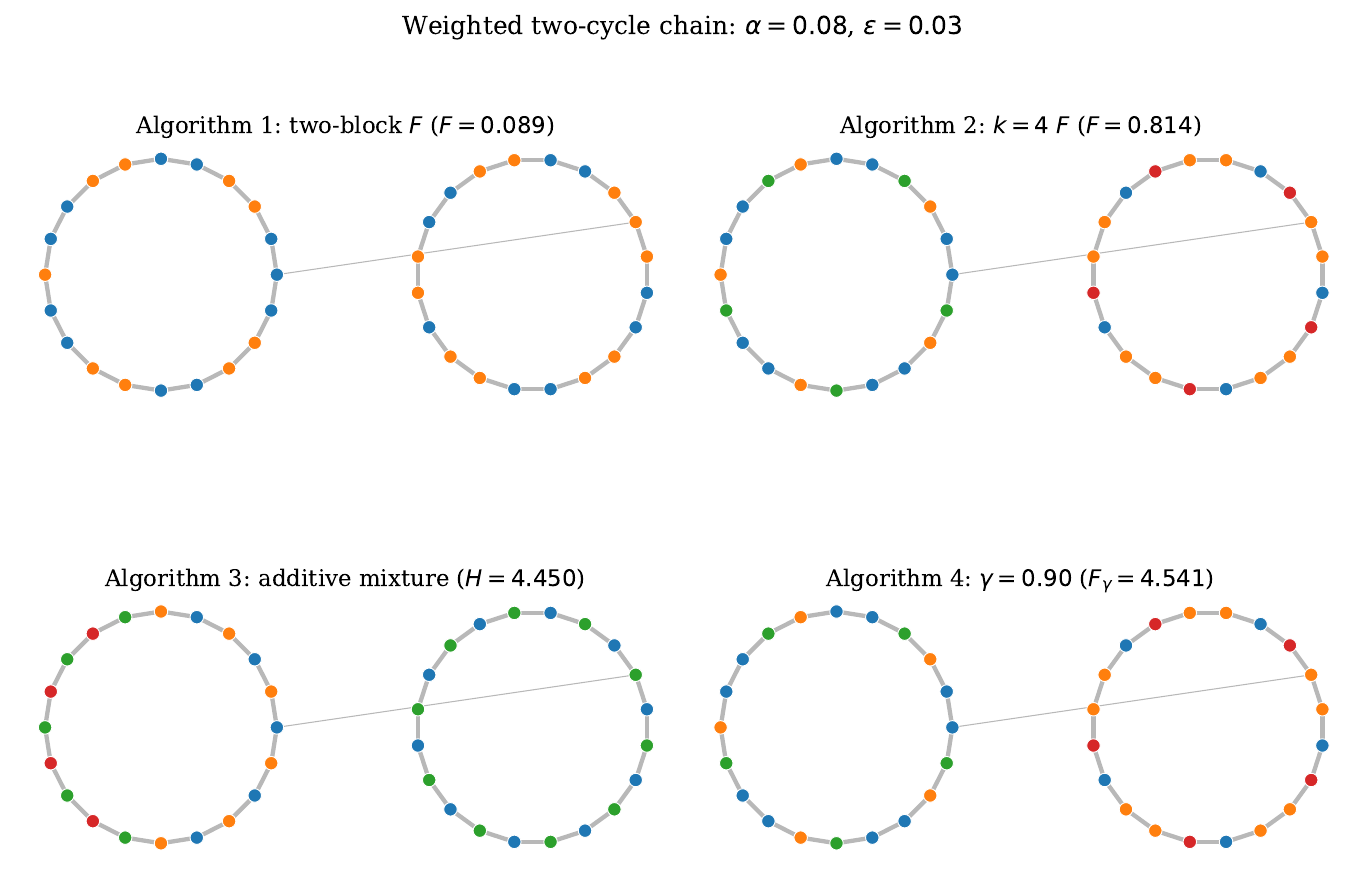}
    \caption{Outputs of the four procedures for the controlled-spectrum
    chain with $\alpha=0.08$ and $\varepsilon=0.03$.  Algorithm~\ref{alg:two-block-sweep} uses
    two blocks; Algorithms~\ref{alg:multiway-sweep}--\ref{alg:multi-horizon-sweep} use $k=4$ blocks.  Raw objective values should
    therefore not be used to rank procedures having different numbers
    of blocks.}
    \label{fig:controlled-partitions}
\end{figure}

The left panel of Figure~\ref{fig:controlled-diagnostics} shows the
different relaxed eigenspaces.  The center panel evaluates the common
lagged Frobenius profile $F(P^t,\mathcal O)$, and the right panel
separately evaluates repeated-kernel convergence.  At repeated-kernel
time $10$, the exact worst-case total-variation distances are
\begin{center}
\begin{tabular}{@{}lccccc@{}}
    \toprule
    kernel $K$
    &$P$
    &$G_{\widehat{\mathcal O}_2}P$
    &$G_{\widehat{\mathcal O}_F}P$
    &$A_{\widehat{\mathcal O}_H}$
    &$G_{\widehat{\mathcal O}_\gamma}P$\\
    \midrule
    $d_{\mathrm{TV}}^{\mathrm{wc}}(K,10)$
    &$0.652$
    &$1.01\times10^{-11}$
    &$6.55\times10^{-7}$
    &$0.129$
    &$6.55\times10^{-7}$\\
    \bottomrule
\end{tabular}
\end{center}
All four constructions accelerate this small chain in this diagnostic,
although an iteration of the additive mixture has a different
computational interpretation from an iteration of
$G_{\mathcal O}P$.

\begin{figure}[tbp]
    \centering
    \includegraphics[width=\textwidth]
        {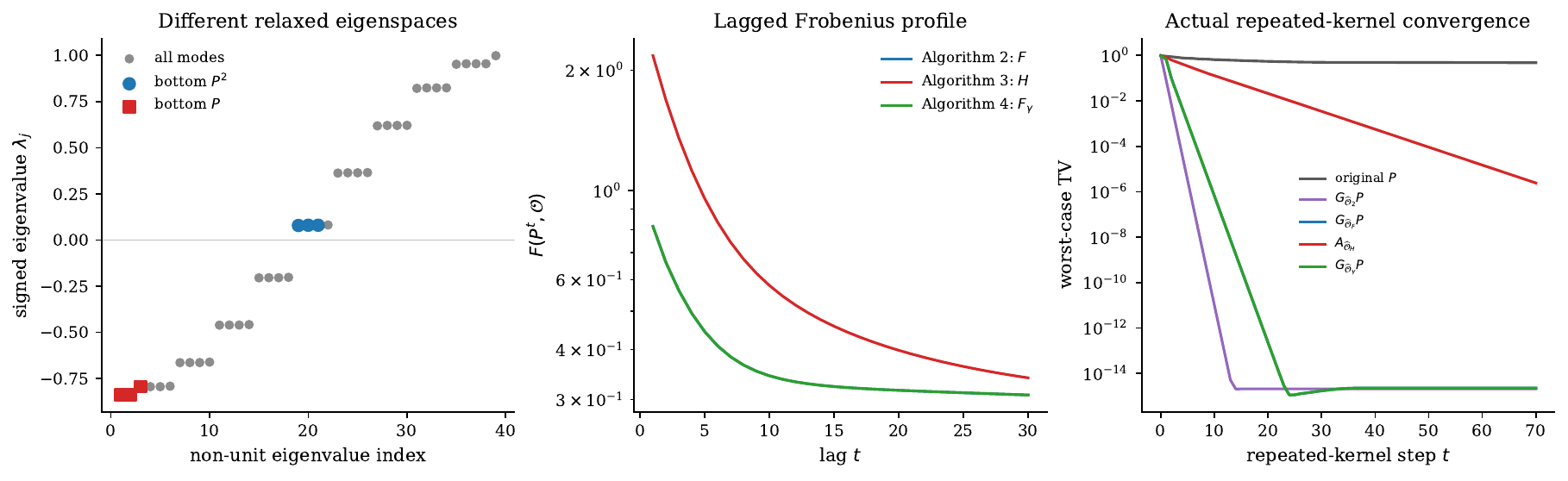}
    \caption{Diagnostics for the controlled-spectrum chain with
    $\alpha=0.08$ and $\varepsilon=0.03$.  Left: all non-unit eigenvalues $\lambda_j(P)$ of $P$ (grey). Blue circles mark the $k-1 = 3$ eigenvalues selected by the bottom spectrum
    of $P^2$, while red squares mark the $k-1 = 3$ algebraically smallest
    eigenvalues of $P$.  Center: the lagged Frobenius objective
    $F(P^t,\mathcal O)$.  Right: the worst-case total-variation
    distance $d_{\mathrm{TV}}^{\mathrm{wc}}(K,t)$ from
    \eqref{eq:worst-case-TV} for each displayed kernel $K$.}
    \label{fig:controlled-diagnostics}
\end{figure}

\paragraph{Empirical algorithmic phase diagram.}
We next use the grid
\[
    \alpha\in
    \{0.02,0.08,0.16,0.25,0.34,0.42,0.48,0.50\}, \, \varepsilon\in
    \{0.001,0.006,0.03,0.05,0.15,0.75\}.
\]
Each phase-diagram cell is a separately seeded $40$-start run and
need not reproduce the $80$-start representative experiment at the
same parameter values.  These are restart-dependent algorithmic
observations, not global discrete optima or phase boundaries.  For two four-block partitions, define their weighted
disagreement by
\begin{equation}
    d_\pi(\mathcal O,\mathcal O')
    =1-\max_{\sigma\in\mathfrak S_4}
       \sum_{i=1}^4\pi(O_i\cap O'_{\sigma(i)}).
    \label{eq:weighted-partition-disagreement}
\end{equation}
Here $\mathfrak S_4$ denotes the symmetric group of all $4!$
permutations of $\llbracket 4\rrbracket$.  The maximization removes the arbitrary
labelling of the blocks by optimally matching the blocks of
$\mathcal O$ with those of $\mathcal O'$.
Figure~\ref{fig:controlled-phase} shows that the selected partitions
can change substantially with both parameters.  Across the $48$ grid
points, the largest observed $F$--$H$ disagreement is $0.725$ and the
largest $F$--$F_\gamma^\infty$ disagreement is $0.550$.  

\begin{figure}[tbp]
    \centering
    \includegraphics[width=0.94\textwidth]
        {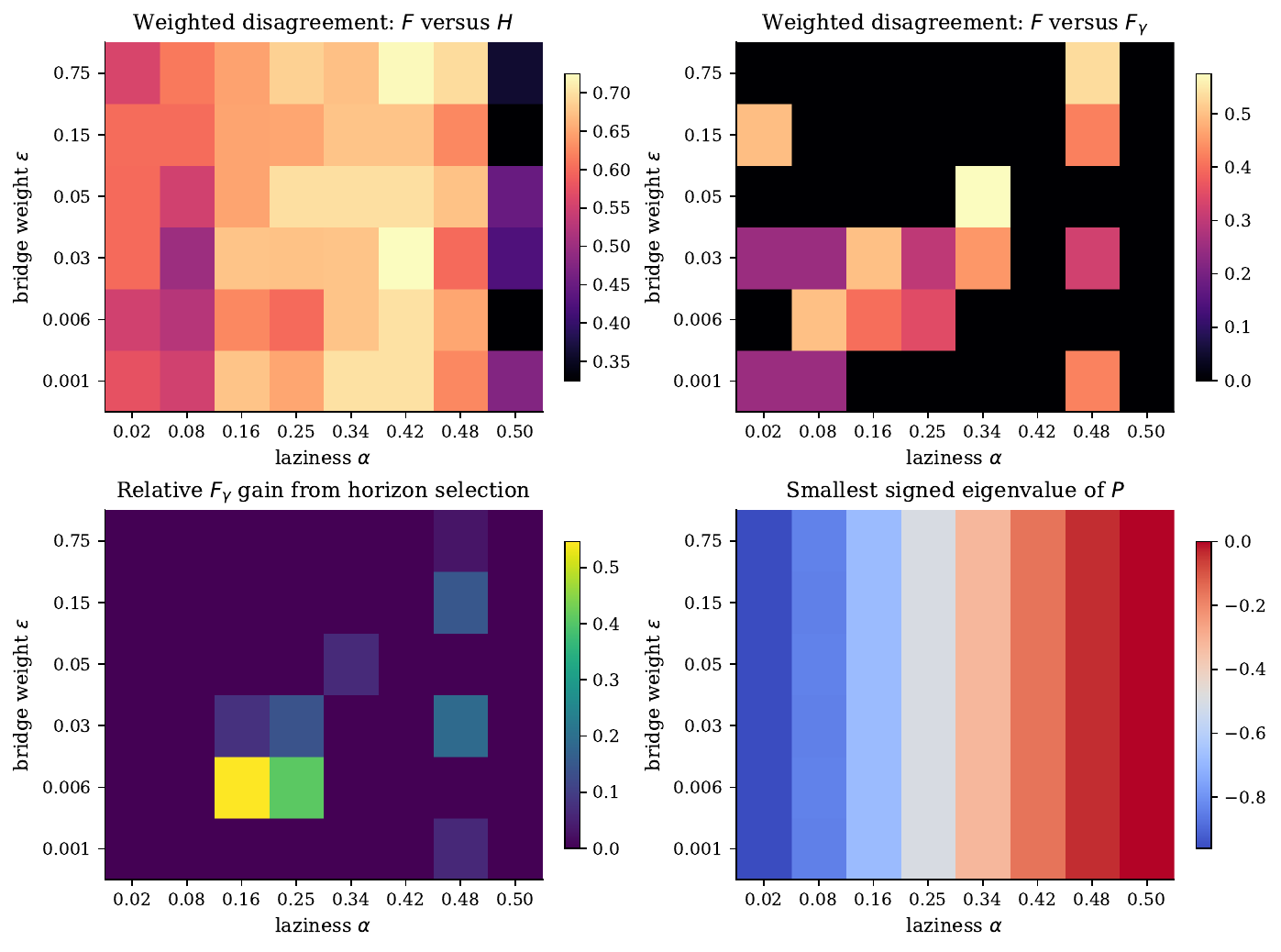}
    \caption{Empirical controlled-spectrum phase diagram.  Weighted
    disagreement is defined in
    \eqref{eq:weighted-partition-disagreement}. For each grid point, let $\widehat{\mathcal O}_F$ and
    $\widehat{\mathcal O}_\gamma$ denote the candidates selected from the
    same bottom-$P^2$ candidate pool by minimizing $F$ and
    $F_\gamma^\infty$, respectively.  The lower-left panel displays
    $
    1-
    \frac{F_\gamma^\infty(\widehat{\mathcal O}_\gamma)}
    {F_\gamma^\infty(\widehat{\mathcal O}_F)},
    $
    the relative reduction in the discounted multi-horizon objective. }
    \label{fig:controlled-phase}
\end{figure}

The largest horizon gain on this grid occurs at
$\alpha=0.16$ and $\varepsilon=0.006$, see
Figure~\ref{fig:controlled-horizon}.  The two selected partitions
disagree on stationary mass $0.550$.  The multi-horizon consideration increases the lag-one Frobenius objective only slightly,
\[
F(P,\widehat{\mathcal O}_F)=0.5805
\quad\text{to}\quad
F(P,\widehat{\mathcal O}_\gamma)=0.5903,
\]
but decreases the discounted multi-horizon objective from
\[
F_\gamma^\infty(\widehat{\mathcal O}_F)=3.4202
\quad\text{to}\quad
F_\gamma^\infty(\widehat{\mathcal O}_\gamma)=1.5517.
\]
At lag $10$, the corresponding values of $F(P^{10},\mathcal O)$ are
$0.3002$ and $0.05818$.  In this particular instance, the separate
repeated-kernel diagnostic also improves at time $10$, from
$1.10\times10^{-6}$ to $3.61\times10^{-8}$.  The latter agreement is
empirical and is not implied by the multi-horizon objective.

This example shows that longer-horizon rescoring can materially alter
the selected partition even when the spectral candidate pool is
unchanged.  We next test the procedures on an Ising model.

\begin{figure}[tbp]
    \centering
    \includegraphics[width=0.94\textwidth]
        {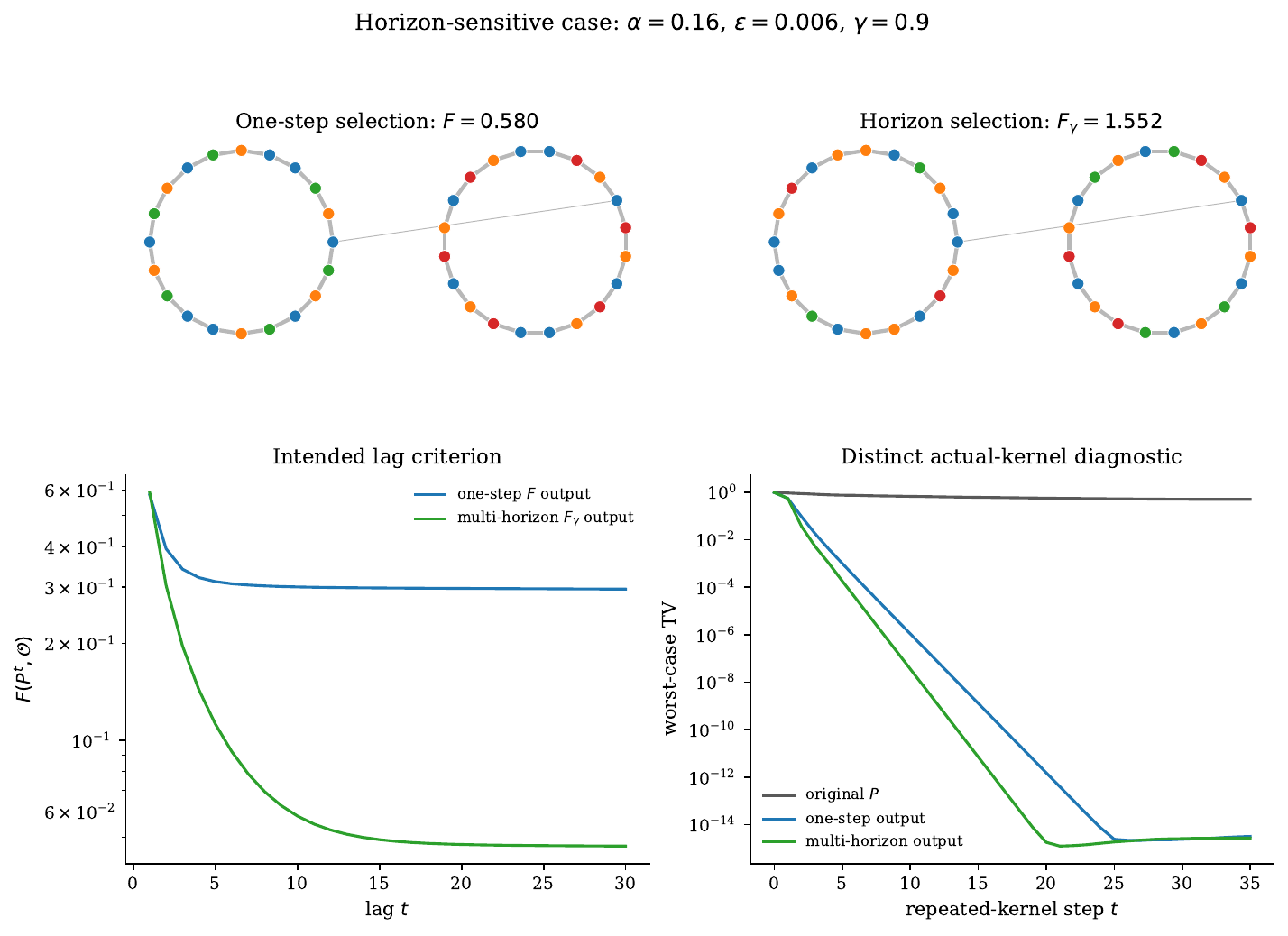}
    \caption{The most horizon-sensitive point $(\alpha,\varepsilon)=(0.16,0.006)$ on the tested grid. The lower-left panel evaluates the lagged Frobenius profile
    $F(P^t,\mathcal O)$; the lower-right panel reports the
    distinct diagnostic $d_{\mathrm{TV}}^{\mathrm{wc}}(K,t)$ from
    \eqref{eq:worst-case-TV} for each displayed kernel $K$.}
    \label{fig:controlled-horizon}
\end{figure}

\subsection{Exact mean-field Ising experiment}
\label{subsec:ising-experiment}

\paragraph{Model and kernel.}
Throughout this subsection, we consider $\mathcal{X} = \{-1,+1\}^d$. For $x\in\{-1,+1\}^d$, define the magnetization of $x$ to be
\[
    M(x):=\sum_{i=1}^d x_i
\]
and the Gibbs distribution $\pi_{\beta,h}$ at inverse temperature $\beta \geq 0$ and external field $h \in \mathbb{R}$ to be
\begin{equation*}
    \pi_{\beta,h}(x)
    :=\frac{1}{Z_{\beta,h}}
      \exp\left\{\frac{\beta}{2d}M(x)^2+hM(x)\right\},
\end{equation*}
where $Z_{\beta,h}$ is the normalization constant. The transition kernel is exact random-site heat-bath dynamics: a site
$i$ is selected uniformly and is resampled according to
\begin{equation*}
    \mathbb P(X_i'=+1\mid x_{-i})
    =\left[
      1+\exp\left\{
        -\frac{2\beta}{d}\sum_{j\ne i}x_j-2h
      \right\}
    \right]^{-1}.
\end{equation*}
We take $d = 8$, enumerate all $2^8=256$ states and consider
\[
    \beta\in\{0.4,0.8,1.0,1.2,1.5\},
    \qquad
    h\in\{0,0.25\}.
\]
At zero field, this grid straddles the Ising critical inverse
temperature $\beta_c=1$; the corresponding zero-field asymptotic
mixing-time transition is studied in \citep{DingLubetzkyPeres2009}.  Our experiment is instead an exact finite-$d$ benchmark and makes no asymptotic claim.

The baselines used in the Ising experiment are summarized
in Table~\ref{tab:ising-baselines}.

\begin{table}[H]
	\centering
	\small
	\caption{Baselines used in the Ising experiment with $k=4$.}
	\label{tab:ising-baselines}
	\begingroup
	\renewcommand{\arraystretch}{1.18}
	\begin{tabularx}{\textwidth}{
			@{}
			>{\raggedright\arraybackslash}p{0.20\textwidth}
			>{\raggedright\arraybackslash}X
			>{\raggedright\arraybackslash}p{0.27\textwidth}
			@{}
		}
		\toprule
		Baseline & Construction & Evaluation \\
		\midrule
		
		Magnetization bins
		&
		Order the magnetization levels $M(x)$ and group whole levels into four
		nonempty contiguous bins.  No magnetization level is split.
		&
		One deterministic partition.
		\\
		
		Coordinate blocks
		&
		The four blocks are determined by the four possible values of
		$(x_1,x_2)\in\{-1,+1\}^2$.
		&
		One deterministic partition.
		\\
		
		Random nonempty assignments
		&
		Randomly permute the states, assign one state to each of the
		four labeled blocks, and assign every remaining state
		independently and uniformly to a block.
		&
		Mean and one empirical standard deviation over $80$ draws in
		Figure~\ref{fig:ising-objectives}.  This is not uniform over
		unlabeled set partitions.
		\\
		
		\bottomrule
	\end{tabularx}
	\endgroup
\end{table}

\begin{figure}[tbp]
    \centering
    \includegraphics[width=\textwidth]{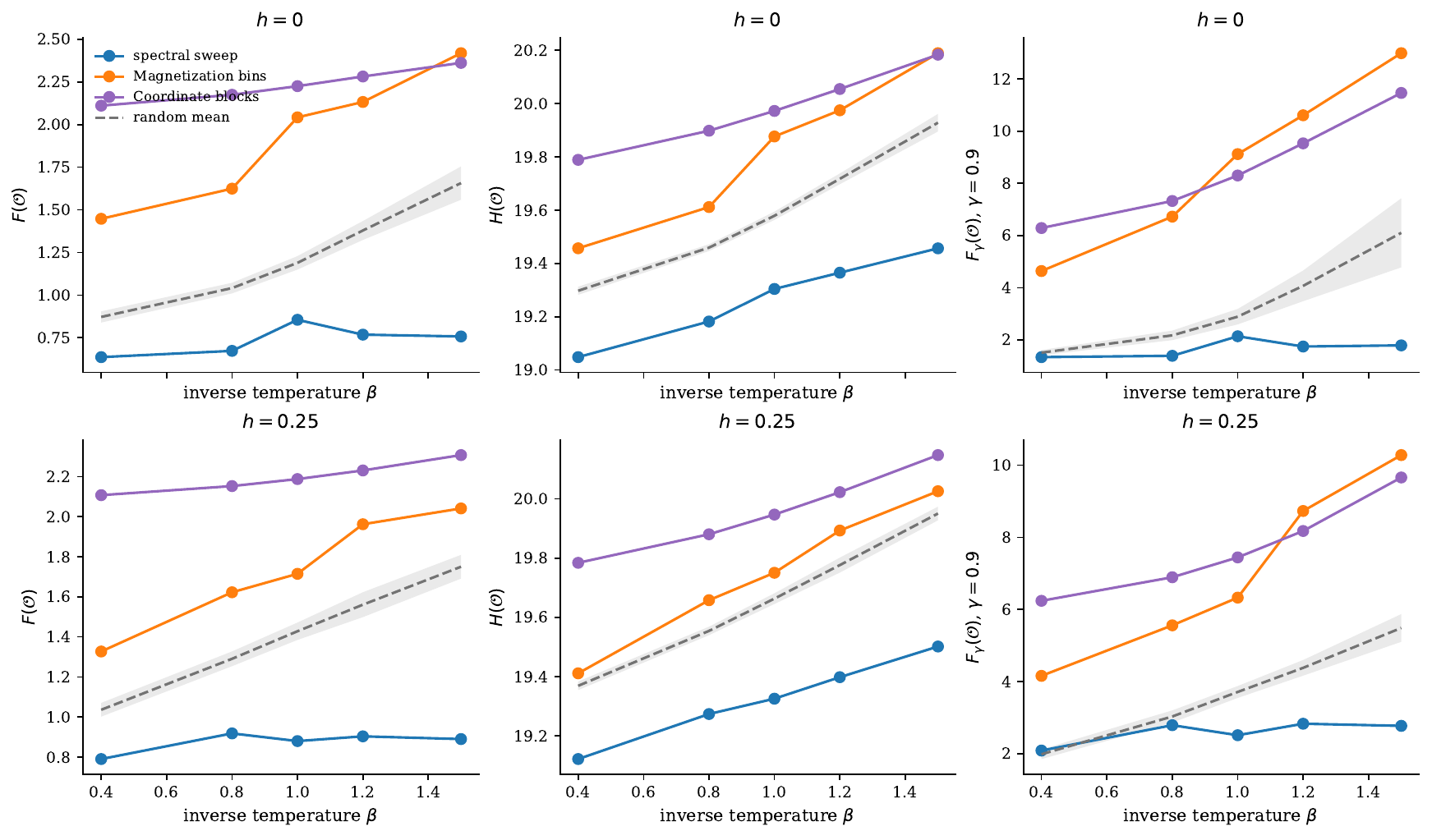}
    \caption{Exact Ising objective values for $d=8$ and $k=4$.
    The shaded band is one empirical standard deviation across $80$
    random nonempty label assignments.  Each blue curve evaluates the
    objective used to select that spectral-sweep output. Within each panel, lower values indicate better performance under the displayed criterion.}
    \label{fig:ising-objectives}
\end{figure}

\paragraph{Objective values.}
At every tested $(\beta,h)$ pair, the sweep selected for a given
objective has a smaller value of that objective than the two
deterministic baselines and the empirical mean of the random baseline.
At the low-temperature, zero-field point $(\beta,h)=(1.5,0)$, the exact
values are
\begin{center}
\begin{tabular}{@{}lccc@{}}
    \toprule
    objective
    &spectral-sweep output
    &magnetization bins
    &coordinate blocks\\
    \midrule
    $F$&$0.7454$&$2.4185$&$2.3616$\\
    $H$&$19.4566$&$20.1889$&$20.1833$\\
    $F_\gamma^\infty$&$1.7356$&$12.9934$&$11.4684$\\
    \bottomrule
\end{tabular}
\end{center}
Relative to the magnetization baseline, the reductions are $69.2\%$
for $F$, $3.63\%$ for $H$, and $86.6\%$ for
$F_\gamma^\infty$.  The percentage for $H$ must be read with care:
$H$ includes the large partition-independent constant $C_{P,k}$ in
\eqref{eq:H-constant}, so the raw percentage understates the relative
change of its partition-dependent term.

Heat-bath kernels have no negative eigenvalues
\citep{DyerGreenhillUllrich2014}. Numerically, in the representative
instance,
\[
\lambda_1=-6.5\times10^{-17}\approx0,
\qquad
\lambda_{n-1}=0.990077.
\]
The tiny negative value of $\lambda_1$ is attributable to
floating-point roundoff.  Consequently, in exact arithmetic, $P$ and
$P^2$ have the same eigenvalue ordering, up to ties, because all
eigenvalues of $P$ are nonnegative and the map $u\mapsto u^2$ is
increasing on $[0,1]$.

For $d=8$ and $k=4$, however, the cutoff after $k-1=3$
eigenfunctions lies inside a sevenfold eigenvalue.  Because the cutoff
lies inside a tied eigenspace, identical spectral ordering alone does
not determine a common embedding.  In the experiment, we deliberately
use the same selected three-dimensional subspace and the same
$\pi$-orthonormal basis for $P$, $P^2$, and the filtered objectives.
The three procedures therefore score a common candidate pool,
allowing their objective-dependent selection rules to be compared
without a confounding change of embedding.  A different choice inside
the tied eigenspace could produce a different pool and different
rounded outputs.

At $(\beta,h)=(1.5,0)$, the $F$ and
$F_\gamma^\infty$ outputs coincide, while the $H$ output differs on
only $0.015$ of the stationary mass.

\paragraph{Structure of the partitions.}
Figure~\ref{fig:ising-partitions} shows how the learned blocks intersect
magnetization levels.  The output is not an ordered magnetization
clustering: it puts most levels in one large block and uses the other
blocks mainly to refine central configurations.  At
$(\beta,h)=(1.5,0)$, the smallest block masses are $0.0119$, $0.0131$,
and $0.0119$ for the $F$, $H$, and multi-horizon outputs, respectively,
and the largest block has stationary mass about $0.96$.  This is
consistent with the reverse-clustering nature of the objectives, which
seek averaging blocks that suppress persistent spectral modes rather than
communities that preserve them.  It also indicates that a minimum-mass
constraint or a block-balance regularizer may be important in
applications.

\begin{figure}[tbp]
    \centering
    \includegraphics[width=\textwidth]{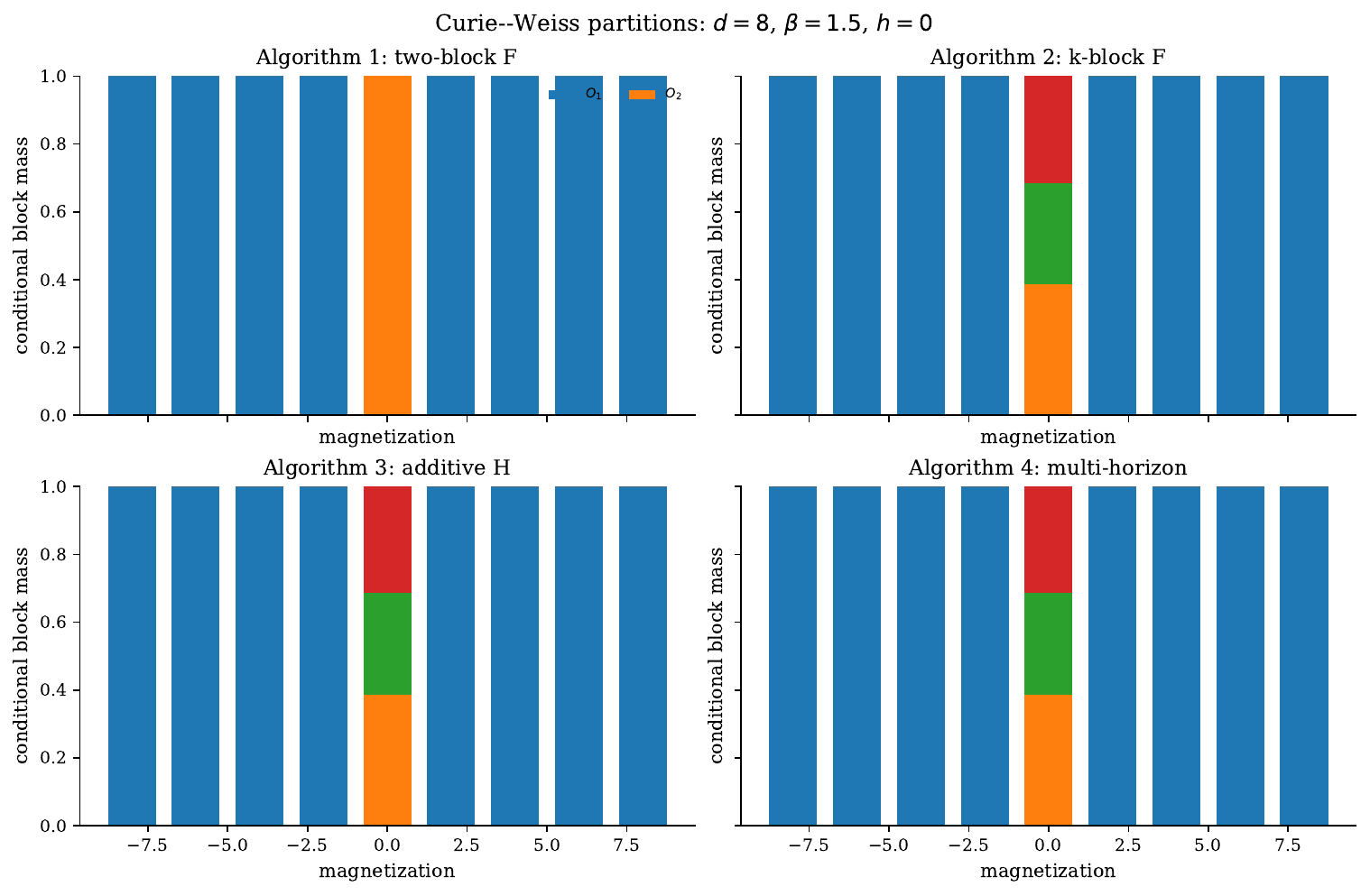}
    \caption{Conditional block allocation by magnetization for the
    exact Curie--Weiss chain at $(\beta,h)=(1.5,0)$.  Each bar gives the
    stationary conditional distribution of block labels within one
    magnetization level.}
    \label{fig:ising-partitions}
\end{figure}

\paragraph{Lagged Frobenius profile and actual kernel convergence.}
We denote by $\widehat{\mathcal O}_{\mathrm{mag}}$ the deterministic
four-block magnetization-bin partition described in
Table~\ref{tab:ising-baselines}. At lag $10$, the lagged Frobenius values are
\[
\begin{split}
    F(P^{10},\widehat{\mathcal O}_F)&=0.04268,\\
    F(P^{10},\widehat{\mathcal O}_H)&=0.04270,\\
    F(P^{10},\widehat{\mathcal O}_\gamma)&=0.04268,
\end{split}
\qquad
    F(P^{10},\widehat{\mathcal O}_{\mathrm{mag}})=0.97483.
\]
The corresponding repeated-kernel total-variation distances at time
$10$ are
\begin{center}
\begin{tabular}{@{}lccccc@{}}
    \toprule
    kernel $K$
    &$P$
    &$G_{\widehat{\mathcal O}_2}P$
    &$G_{\widehat{\mathcal O}_F}P$
    &$A_{\widehat{\mathcal O}_H}$
    &$G_{\widehat{\mathcal O}_\gamma}P$\\
    \midrule
    $d_{\mathrm{TV}}^{\mathrm{wc}}(K,10)$
    &$0.5277$
    &$1.53\times10^{-4}$
    &$2.46\times10^{-4}$
    &$0.0820$
    &$2.46\times10^{-4}$\\
    \bottomrule
\end{tabular}
\end{center}
As Figure~\ref{fig:ising-diagnostics} shows, the additive mixture
improves markedly on $P$ but converges more slowly per iteration than
the $G_{\mathcal O}P$ outputs in this example.  This does not conflict
with its objective: $H$ measures the one-step Frobenius distance of
$A_{\mathcal O}$, not its complete total-variation mixing curve.

\begin{figure}[tbp]
    \centering
    \includegraphics[width=\textwidth]{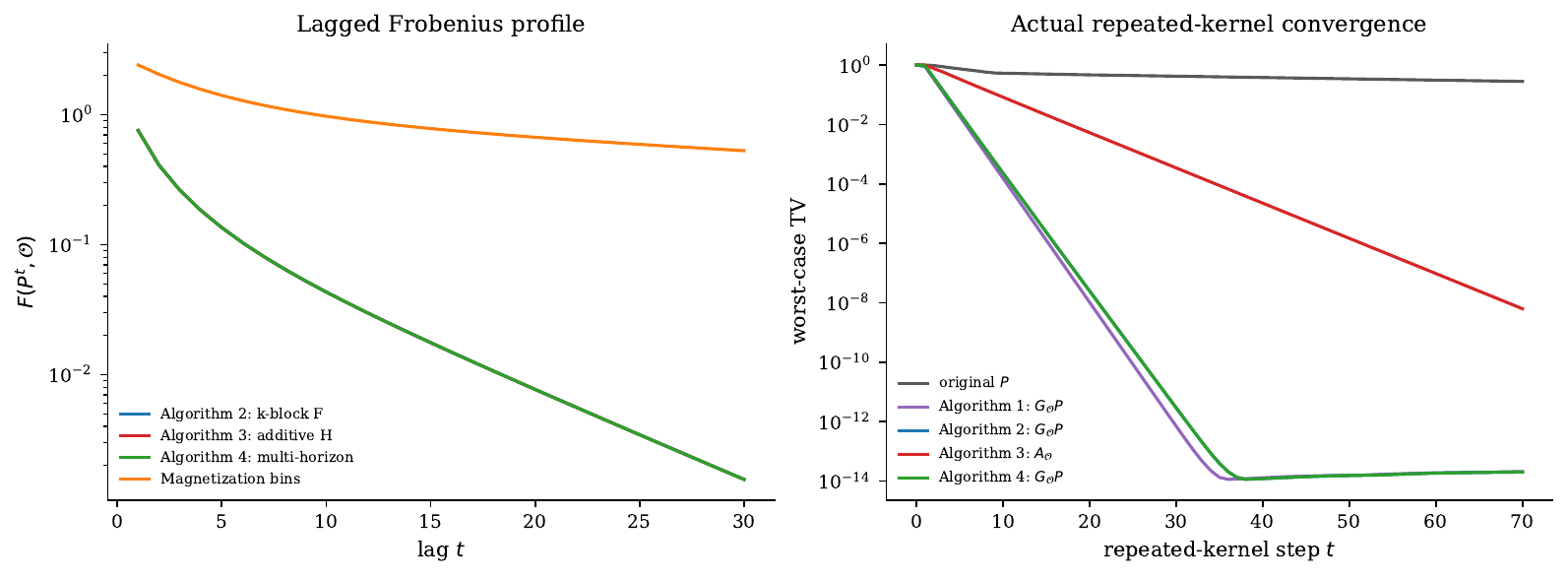}
    \caption{Diagnostics for the exact Curie--Weiss chain at
    $(\beta,h)=(1.5,0)$.  Left: the lagged Frobenius profile
    $F(P^t,\mathcal O)$.  Right: exact worst-case total variation for
    repeated application of each displayed kernel.}
    \label{fig:ising-diagnostics}
\end{figure}

\subsection{Interpretation and limitations of the first two experiments}
\label{subsec:numerical-limitations}

The first two experiments highlight complementary effects: the controlled graph
shows why the additive-mixture embedding must retain the signed spectrum
of $P$ and how one-step and multi-horizon criteria can select different
partitions, while the exact Ising model demonstrates applicability
to a nontrivial Gibbs distribution whose selected partitions need not resemble
magnetization or metastable clusters.  These two studies are diagnostic
rather than exhaustive: they involve only $40$ and $256$ states and unconstrained
objectives may produce unbalanced blocks.  Larger systems will require
sparse eigensolvers, approximate objective evaluation, or symmetry
reduction; moreover, the total-variation plots compare kernel iterations
rather than equal computational budgets, and the Frobenius objectives
alone do not provide a universal total-variation mixing guarantee. 

\subsection{Bayesian variable selection and block balance}
\label{subsec:bvs-balance}

We finally examine whether faster convergence of the Markov kernel
improves statistical estimation.  A state
$m\in\X=\{0,1\}^{12}$ represents a variable-selection model, where
$m_j=1$ means that predictor $j$ is included.  Hence
$|\X|=2^{12}=4096$.

We generate $80$ training observations and $60$ independent test
predictor vectors.  For each training observation
$i\in\llbracket 80\rrbracket$,
generate an independent latent vector
\[
Z_i=(Z_{i1},\ldots,Z_{i6})
\sim \mathcal N_6(0,I_6)
\]
and independent noises
\[
U_{i1},\ldots,U_{i,12}
\stackrel{\mathrm{iid}}{\sim}\mathcal N(0,1).
\]
All latent variables and noises are mutually independent.  For
$r\in\llbracket 6\rrbracket$, define the two raw predictors in pair $r$ by
\[
\begin{aligned}
	X^{\mathrm{raw}}_{i,2r-1}
	&=
	\sqrt{\rho}\,Z_{ir}
	+\sqrt{1-\rho}\,U_{i,2r-1},\\
	X^{\mathrm{raw}}_{i,2r}
	&=
	\sqrt{\rho}\,Z_{ir}
	+\sqrt{1-\rho}\,U_{i,2r},
\end{aligned}
\qquad
\rho=0.995.
\]
Thus the two predictors within each pair have population correlation
$\rho$, whereas different pairs are independent.  The test predictors
are generated independently by the same construction.

Each predictor column is standardized using its training-sample mean
and standard deviation.  Let $X\in\mathbb R^{80\times12}$ denote the
resulting standardized training design matrix; the test predictors
are standardized using the same training means and standard
deviations, and their resulting vectors are denoted by
$x_i^{\mathrm{test}}$,
$i\in\llbracket 60\rrbracket$.

The uncentered training response is generated according to
\[
Y_i^{\mathrm{raw}}
=
1.45Z_{i1}
-1.20Z_{i2}
+1.00Z_{i3}
+0.85\varepsilon_i,
\qquad
\varepsilon_i\stackrel{\mathrm{iid}}{\sim}\mathcal N(0,1),
\]
where the response noises are independent of all latent variables and
proxy noises.  Let $y=(y_1,\ldots,y_{80})^\top$ be the centered
response, where
\[
y_i
=
Y_i^{\mathrm{raw}}
-\frac{1}{80}\sum_{s=1}^{80}Y_s^{\mathrm{raw}}.
\]
Thus the first three predictor pairs carry signal, whereas the
remaining three pairs correspond to latent variables with zero
regression coefficients and are therefore null predictor pairs.

For $m\in\X$, let $|m|=\sum_{j=1}^{12}m_j$, let $X_m$ contain the
columns of $X$ for which $m_j=1$, and let
$\beta_m\in\mathbb R^{|m|}$ be the corresponding coefficient vector.
The model prior has independent coordinates
\[
m_j\sim\operatorname{Bernoulli}(0.16),
\qquad j\in\llbracket 12\rrbracket.
\]
Conditional on $m$, we use the centered Gaussian linear model and a
fixed Zellner $g$-prior with prior multiplier $80$:
\[
\begin{aligned}
	y\mid\beta_m,\sigma^2,m
	&\sim
	\mathcal N\!\left(
	X_m\beta_m,\sigma^2I_{80}
	\right),\\
	\beta_m\mid\sigma^2,m
	&\sim
	\mathcal N\!\left(
	0,\,
	80\sigma^2(X_m^\top X_m)^{-1}
	\right),\\
	p(\sigma^2)
	&\propto
	\frac{1}{\sigma^2}.
\end{aligned}
\]
Centering $X$ and $y$ is equivalent to including a common intercept
with a flat prior and integrating it out.  Integrating out
$(\beta_m,\sigma^2)$ and enumerating all $2^{12}=4096$ models gives
the exact posterior distribution $\pi$ on $\X$.

Let $\widetilde P$ denote the single-toggle Metropolis kernel.  The
baseline kernel is its $0.1$-lazy version, that is,
\[
P=0.1I+0.9\widetilde P.
\]
Equivalently, at each iteration the chain remains at its current model
with probability $0.1$.  With probability $0.9$, one coordinate is
chosen uniformly, its inclusion indicator is toggled, and the proposed
model is accepted with the usual Metropolis probability.  Rejected
proposals also leave the chain at its current model.

We take $k=4$ and $\gamma=0.9$.  Algorithms
\ref{alg:multiway-sweep}, \ref{alg:additive-mixture-sweep} and \ref{alg:multi-horizon-sweep} are applied
with one modification to their rounding step.  In addition to ordinary
weighted $k$-means, we consider a capacity-constrained version
requiring
\begin{equation}
	0.15\leq\pi(O_i)\leq0.35,
	\qquad i\in\llbracket 4\rrbracket.
	\label{eq:bvs-balance-band}
\end{equation}
The spectral embeddings and final rescoring objectives $F$, $H$, and
$F_\gamma^\infty$ remain unchanged. We compare the resulting kernels
\[
G_{\widehat{\mathcal O}_F}P,\qquad
A_{\widehat{\mathcal O}_H},\qquad
G_{\widehat{\mathcal O}_\gamma}P
\]
with the original $0.1$-lazy single-toggle Metropolis kernel $P$, a
separate add/delete/swap Metropolis kernel, and a mass-matched,
$F$-rescored random averaging partition. Here ``mass-matched'' means that the random candidates satisfy the
same admissible block-mass constraints in
\eqref{eq:bvs-balance-band}; their block masses need not coincide
exactly with those of any spectral partition. For this last baseline, we
generate $16$ candidate partitions by drawing independent
$\operatorname{Uniform}(0,1)$ assignment costs for every state--block
pair and minimizing the total assignment cost subject to the same
block-mass constraints in \eqref{eq:bvs-balance-band}.  We then
evaluate $F(\mathcal O)$ for every feasible candidate and retain the
one with the smallest value.  The unconstrained $F$ output is retained
only as a diagnostic.  In this finite experiment, every
$G_{\mathcal O}$ update samples exactly from
$\pi(\,\cdot\mid O_i)$.

\paragraph{Statistical targets.}
Consider one trajectory
$m^{(1)},\ldots,m^{(T)}$.  For predictor $j$, define the exact and
empirical posterior inclusion probabilities by
\[
    p_j=\pi(m_j=1),
    \qquad
    \widehat p_j
    =\frac{1}{T}\sum_{t=1}^T m_j^{(t)}.
\]

For predictor pair $r\in\llbracket 6\rrbracket$ and
$(a,b)\in \{0,1\}^2$, define
\[
    q_r(a,b)
    =
    \pi\!\left(
        (m_{2r-1},m_{2r})=(a,b)
    \right)
\]
and
\[
    \widehat q_r(a,b)
    =
    \frac{1}{T}\sum_{t=1}^T
    \1_{\{
        (m_{2r-1}^{(t)},m_{2r}^{(t)})=(a,b)
    \}}.
\]
Thus $q_r$ and $\widehat q_r$ are probability distributions on the
four inclusion patterns $00$, $01$, $10$, and $11$.

The exact and empirical model-size distributions are
\[
    s_\ell=\pi(|m|=\ell),
    \qquad
    \widehat s_\ell
    =
    \frac{1}{T}\sum_{t=1}^T
    \1_{\{|m^{(t)}|=\ell\}},
    \qquad \ell\in\llbracket 0,12\rrbracket.
\]

Finally, let $\mu_m(x_i^{\mathrm{test}})$ denote the posterior-mean
prediction under model $m$ at test point $x_i^{\mathrm{test}}$.  The
exact and empirical Bayesian-model-averaged predictions are
\[
    \overline\mu_i
    =
    \sum_{m\in\X}
    \pi(m)\mu_m(x_i^{\mathrm{test}})
\]
and
\[
    \widehat\mu_i
    =
    \frac{1}{T}\sum_{t=1}^T
    \mu_{m^{(t)}}(x_i^{\mathrm{test}}),
    \qquad i\in\llbracket 60\rrbracket.
\]

The four statistical errors are
\begin{equation}
    \label{eq:bvs-errors}
    \begin{aligned}
        e_{\mathrm{PIP}}
        &:=
        \left[
            \frac{1}{12}
            \sum_{j=1}^{12}
            (\widehat p_j-p_j)^2
        \right]^{1/2}, \quad
        e_{\mathrm{pair}}
        :=
        \frac{1}{6}\sum_{r=1}^6
        \frac{1}{2}
        \sum_{(a,b)\in \{0,1\}^2}
        \left|
            \widehat q_r(a,b)-q_r(a,b)
        \right|,\\
        e_{\mathrm{size}}
        &:=
        \frac{1}{2}
        \sum_{\ell=0}^{12}
        |\widehat s_\ell-s_\ell|, \quad
        e_{\mathrm{BMA}}
        :=
        \left[
            \frac{1}{60}
            \sum_{i=1}^{60}
            (\widehat\mu_i-\overline\mu_i)^2
        \right]^{1/2}.
    \end{aligned}
\end{equation}
Here $e_{\mathrm{PIP}}$ measures marginal variable-selection error,
$e_{\mathrm{pair}}$ measures error in the posterior uncertainty
between correlated proxies, $e_{\mathrm{size}}$ compares the
model-size distributions, and $e_{\mathrm{BMA}}$ measures error in
the Bayesian-model-averaged predictive mean.

The posterior displays the intended local-mixing obstruction.  For
the three signal-bearing proxy pairs, the posterior probabilities of
the patterns $(00,01,10,11)$ are
\[
\begin{aligned}
    &(0.000,0.249,0.735,0.016),\\
    &(0.000,0.382,0.605,0.013),\\
    &(0.000,0.359,0.627,0.014).
\end{aligned}
\]
Thus one proxy from each pair is almost always present, but the
posterior remains uncertain about which proxy should be selected.
A single-toggle chain can move between $10$ and $01$ only through
one of the disfavoured patterns $00$ or $11$.

\begin{figure}[tbp]
	\centering
	\includegraphics[width=\textwidth]
	{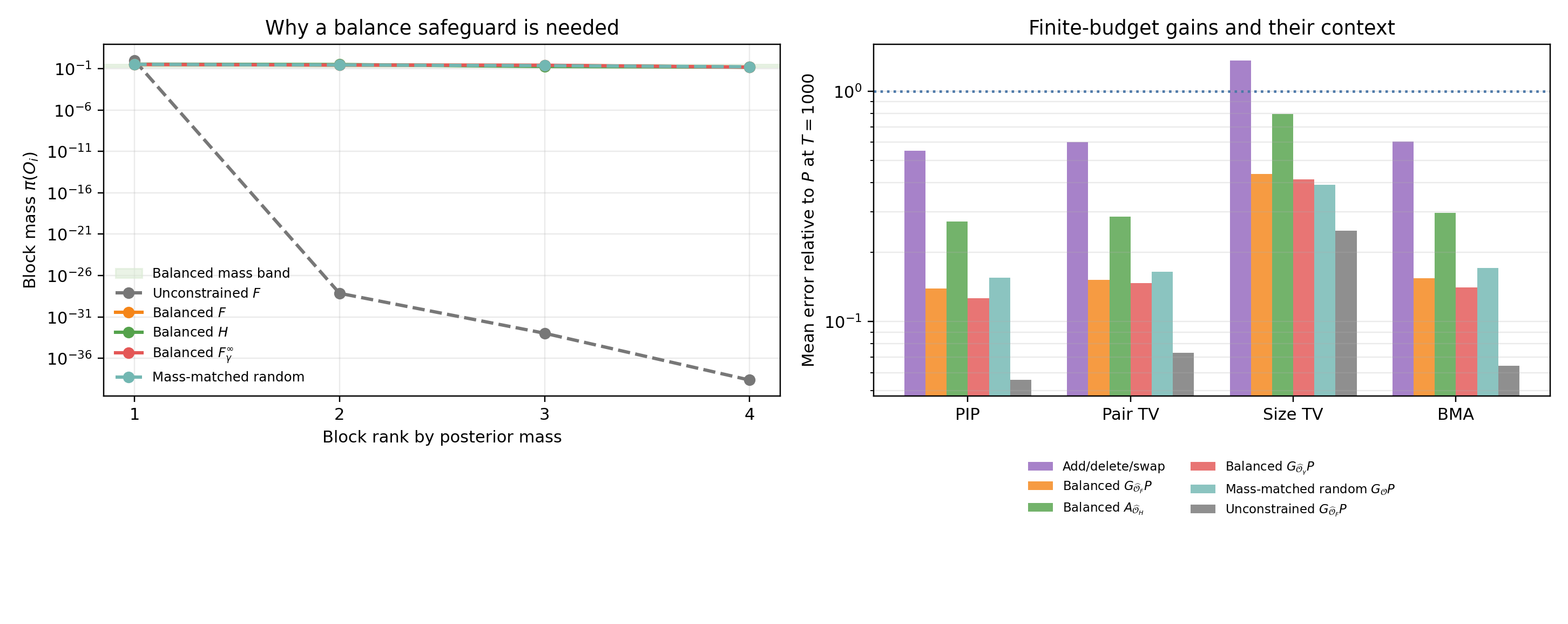}
	\caption{Exact Bayesian variable-selection benchmark with $k=4$.
		Left: block masses in decreasing order; unconstrained rounding
		places almost all posterior mass in one block, whereas the
		constrained outputs obey \eqref{eq:bvs-balance-band}.
		Right: mean values of the four errors in
		\eqref{eq:bvs-errors} at $T=1000$, divided by the corresponding
		mean error for the $0.1$-lazy single-toggle Metropolis kernel $P$.
		The dotted horizontal line at one represents $P$, while the purple
		bars represent the separate add/delete/swap Metropolis benchmark.
		Values below one indicate improvement over $P$. For each kernel, results are averaged over $160$ independent
		length-$1000$ trajectories, all initialized at the
		maximum-a-posteriori model
		$
		m_{\mathrm{MAP}}
		\in
		\operatorname*{argmax}_{m\in\X}\pi(m).
		$}
	\label{fig:bvs-balance}
\end{figure}

Figure~\ref{fig:bvs-balance} explains why the balance constraint is
important.  Unconstrained rounding returns block masses approximately
\[
\left(
    1.00,\,
    7.3\times10^{-29},\,
    1.1\times10^{-33},\,
    2.4\times10^{-39}
\right),
\]
so its Gibbs update is nearly a direct posterior draw.  Its excellent
errors are therefore an oracle-like diagnostic rather than evidence
for a useful partition in practice.

The balanced outputs avoid this collapse and still improve all four
statistical targets.  Although Figure~\ref{fig:bvs-balance} reports
results at $T=1000$, at the longer, unplotted budget $T=5000$, the
balanced multi-horizon kernel reduces $e_{\mathrm{PIP}}$,
$e_{\mathrm{pair}}$, $e_{\mathrm{size}}$, and $e_{\mathrm{BMA}}$ by
factors $10.4$, $8.4$, $2.5$, and $9.5$, respectively, relative to
$P$.  The balanced one-step kernel performs similarly, while the
additive mixture gives smaller but still clear gains.

The mass-matched, $F$-rescored random baseline performs close to the
balanced spectral outputs.  Thus, in this experiment, most of the
improvement over $P$ appears to arise from introducing a nondegenerate
conditional-averaging step with controlled block masses, while the
additional improvement obtained from spectral candidate generation is
comparatively modest.  This conclusion is specific to the present
dataset, balance constraint, and candidate budget.  Moreover, the
balance constraint controls posterior mass rather than simulation
cost.  Exact conditional draws are available here only because all
$4096$ models are enumerated; a scalable application would require
blocks whose conditional posterior distributions can be sampled
without enumerating the full model space.

\section*{Acknowledgements}

Michael Choi acknowledges financial support from the National University
of Singapore through projects A-0000178-02-00 and A-8003574-00-00.

Both authors also acknowledge the use of OpenAI's ChatGPT, powered by
GPT-5.6 Sol, as an AI-assisted research and writing tool during the
development of this manuscript.  It was used to assist with literature
search, the exploration and checking of mathematical arguments,
\LaTeX{} editing, and the development and verification of numerical
code.  All AI-assisted outputs were reviewed by the authors. All scientific content, interpretations, and conclusions remain the sole responsibility of the authors.

\bibliographystyle{plainnat}
\bibliography{spectral_sweep_two_and_k_block_v13}

\end{document}